\setlist[description]{font=\normalfont\itshape\space}
\newtheorem{theorem}{Theorem}[section]
\newtheorem{lemma}[theorem]{Lemma}
\newtheorem{proposition}[theorem]{Proposition}
\newtheorem{corollary}[theorem]{Corollary}
\newtheorem{claim}[theorem]{Claim}
\theoremstyle{remark}
\newtheorem{remark}[theorem]{Remark}
\theoremstyle{definition}
\newcommand\Dirichlet{\operatorname{Dirichlet}}
\newcommand\E{\mathbb{E}} \renewcommand\P{\mathbb{P}} \newcommand\R{\mathbb{R}} \newcommand\T{{\scriptscriptstyle{\mathsf{T}}}} \newcommand\argmin{\operatorname*{\arg\min}} \newcommand\convhull{\operatorname{conv}} \newcommand\diam{\operatorname{diam}} \newcommand\var{\operatorname{var}} \newcommand\ind[1]{\mathds{1}_{\{#1\}}} \usepackage{xspace}
\newcommand\supp{\operatorname{supp}}
\newcommand\sign{\operatorname{sign}}
\newcommand\cX{\mathcal{X}}
\newcommand\wh{\widehat}
\newcommand\Ball{\operatorname{B}}
\newcommand\RE{\operatorname{RE}}
\newcommand\risk{\mathcal{R}_{0/1}}
\newcommand\mse{\mathcal{R}_{\operatorname{sq}}}
\newcommand\cH{\mathcal{H}}
\title{Overfitting or perfect fitting? Risk bounds for classification and regression rules that interpolate}
\author[1]{Mikhail Belkin}
\author[2]{Daniel Hsu}
\author[3]{Partha P. Mitra}
\affil[1]{The Ohio State University, Columbus, OH}
\affil[2]{Columbia University, New York, NY}
\affil[3]{Cold Spring Harbor Laboratory, Cold Spring Harbor, NY}
\begin{document}
\maketitle
{\def\thefootnote{}
\footnotetext{E-mail:
\texttt{mbelkin@cse.ohio-state.edu},
\texttt{djhsu@cs.columbia.edu},
\texttt{mitra@cshl.edu}}}

\begin{abstract}
Many modern machine learning models are trained to achieve zero or near-zero training error in order to obtain near-optimal (but non-zero) test error. This phenomenon of strong generalization performance for ``overfitted'' / interpolated classifiers appears to be  ubiquitous in high-dimensional data, having been observed in deep networks, kernel machines, boosting and random forests. Their performance is consistently robust  even when the data contain large amounts of label noise. 

Very little theory is available to explain these observations. The vast majority of theoretical analyses of generalization allows for interpolation only when there is little or no label noise. This paper takes a step toward a theoretical foundation for interpolated classifiers by analyzing local interpolating schemes, including  geometric simplicial interpolation algorithm and singularly weighted $k$-nearest neighbor schemes. Consistency or near-consistency is proved for these schemes in  classification and regression problems. Moreover, the nearest neighbor schemes exhibit optimal rates under some standard statistical assumptions.

Finally, this paper suggests a way to explain the phenomenon of adversarial examples, which are seemingly ubiquitous in modern machine learning, and also discusses some connections to kernel machines and random forests in the interpolated regime.
\end{abstract}

\section{Introduction}
\label{sec:intro}

The central problem of supervised inference is to predict labels of unseen data points from a set of labeled training data. 
The literature on this subject is vast, ranging from classical parametric and non-parametric statistics~\citep{wasserman2004all,wasserman2006all} to more recent machine learning methods, such as kernel machines~\citep{scholkopf2001learning}, boosting~\citep{schapire2012boosting}, random forests~\citep{breiman2001random}, and deep neural networks~\citep{Goodfellow-et-al-2016}.
There is a wealth of theoretical analyses for these methods based on a spectrum of techniques including non-parametric estimation~\citep{tsybakov2009introduction}, capacity control such as VC-dimension or Rademacher complexity~\citep{shalev2014understanding}, and regularization theory~\citep{steinwart2008support}.
In nearly all of these results, theoretical analysis of generalization requires ``what you see is what you get'' setup, where prediction performance on unseen test data is close to the performance on the training data, achieved by carefully managing the bias-variance trade-off. Furthermore, it is widely accepted in the literature that interpolation has poor statistical properties and should be dismissed out-of-hand. For example, in their book on non-parametric statistics, \citet[page 21]{gyorfi02} say that a certain procedure ``may lead to a function which interpolates the data and hence is not a reasonable estimate''. 

Yet, this is not how many modern machine learning methods are used in practice.
For instance, the best practice for training  deep neural networks is to first perfectly fit the training data~\citep{russ17}. The resulting (zero training loss) neural networks after this first step can already have good performance on test data~\citep{zhang2017understanding}.
Similar observations about models that perfectly fit training data have been made for other machine learning methods, including boosting~\citep{schapire1998}, random forests~\citep{cutler2001pert}, and kernel machines~\citep{belkin2018understand}.
These methods return good classifiers even when the training data have high levels of label noise~\citep{wyner2017explaining,zhang2017understanding,belkin2018understand}.

An important effort to show that fitting the training data exactly can under certain conditions be theoretically justified is the margins theory for boosting~\citep{schapire1998} and other margin-based methods~\citep{koltchinskii2002empirical,bartlett2017spectrally,golowich2017size,neyshabur2017pac,liang2017fisher}. However, this theory lacks explanatory power for the performance of classifiers that perfectly fit {\it noisy} labels, when it is known that no margin is present in the data~\citep{wyner2017explaining,belkin2018understand}. Moreover, margins theory does not apply to regression and to functions (for regression or classification) that interpolate the data in the classical  sense~\citep{belkin2018understand}.

In this paper, we identify the challenge of providing a rigorous understanding of generalization in machine learning models that interpolate training data.
We take first steps towards such a theory by proposing and analyzing  interpolating methods for  classification and regression
with non-trivial risk and consistency  guarantees.

\paragraph{Related work.}
Many existing forms of generalization analyses face significant analytical and conceptual barriers to being able to explain the success of interpolating methods.

\begin{description}
  \item[Capacity control.] Existing capacity-based bounds (e.g., VC dimension, fat-shattering dimension, Rademacher complexity) for empirical risk minimization~\citep{ab-aie-95,bartlett1996fat,ab-nnltf-99,schapire1998,koltchinskii2002empirical} do not give useful risk bounds for functions with zero empirical risk whenever there is non-negligible label noise. This is because function classes rich enough to perfectly fit noisy training labels generally have capacity measures that grow quickly with the number of training data, at least with the existing notions of capacity~\citep{belkin2018understand}. Note that since the training risk is zero for the functions of interest, the generalization bound must bound their true risk, as it equals the \emph{generalization gap} (difference between the true and empirical risk). Whether such capacity-based generalization bounds exist is open for debate.  

  \item[Stability.] Generalization analyses based on algorithmic stability~\citep{bousquet2002stability,bassily2016algorithmic} control the difference between the true risk and the training risk, assuming bounded sensitivity of an algorithm's output to small changes in training data. Like standard uses of capacity-based bounds, these approaches are not well-suited to settings when training risk is identically zero but true risk is non-zero.
 
  \item[Regularization.] Many analyses are available for regularization approaches to statistical inverse problems, ranging from Tikhonov regularization to early stopping~\citep{caponnetto2007optimal,steinwart2008support,yao2007early,bauer2007regularization}. To obtain a risk bound, these analyses require the regularization parameter $\lambda$ (or some analogous quantity) to approach zero as the number of data $n$ tends to infinity. However, to get (the minimum norm) interpolation, we need $\lambda \to 0$ while $n$ is fixed, causing the bounds to diverge.

  \item[Smoothing.] There is an extensive literature on local prediction rules in non-parametric statistics~\citep{wasserman2006all,tsybakov2009introduction}. Nearly all of these analyses require local smoothing (to explicitly balance bias and variance) and thus do not apply to interpolation. (Two exceptions are discussed below.)

\end{description}

Recently, \citet{wyner2017explaining} proposed a thought-provoking explanation for the performance of AdaBoost and random forests in the interpolation regime, based on ideas related to ``self-averaging'' and localization. However, a theoretical basis for these ideas is not developed in their work.

There are two important exceptions to the aforementioned discussion of non-parametric methods.
First, the nearest neighbor rule (also called $1$-nearest neighbor, in the context of the general family of $k$-nearest neighbor rules) is a well-known interpolating classification method, though it is not generally consistent for classification (and is not useful for regression when there is  significant amount of label noise). Nevertheless, its asymptotic risk can be shown to be bounded above by twice the Bayes risk~\citep{cover1967nearest}.\footnote{More precisely, the expected risk of the nearest neighbor rule converges to $\E[2\eta(X)(1-\eta(X))]$, where $\eta$ is the regression function; this quantity can be bounded above by $2R^*(1-R^*)$, where $R^*$ is the Bayes risk.}
A second important (though perhaps less well-known)  exception is the non-parametric smoothing method of \citet{devroye1998hilbert} based on a singular kernel called the Hilbert kernel (which is related to Shepard's method~\citep{shepard1968two}). The resulting estimate of the regression function interpolates the training data, yet is proved to be consistent for classification and regression.

The analyses of the nearest neighbor rule and Hilbert kernel regression estimate are not based on bounding generalization gap, the difference between the true risk and the empirical risk. Rather, the true risk is analyzed directly by exploiting locality properties of the prediction rules. In particular, the prediction at a point depends primarily or entirely on the values of the function at nearby points. This inductive bias favors functions where local information in a neighborhood can be aggregated to give an accurate representation of the underlying regression function.

\paragraph{What we do.}

Our approach to understanding the generalization properties of interpolation methods is to understand and isolate the key properties of local classification, particularly the nearest neighbor rule.
 First, we construct and analyze an interpolating function based on multivariate triangulation and linear interpolation on each simplex (\Cref{sec:simplex}), which results in a geometrically intuitive and theoretically tractable prediction rule. Like nearest neighbor, this method is not statistically consistent, but, unlike nearest neighbor, its asymptotic risk approaches the Bayes risk as the dimension becomes large, even when the Bayes risk is far from zero---a kind of ``blessing of dimensionality''\footnote{This does not remove the usual curse of dimensionality, which is similar to the standard analyses of $k$-NN and other non-parametric methods.}. Moreover, under an additional  margin condition the difference between the Bayes risk and our classifier is exponentially small in the dimension. 
 
 A similar finding holds for regression, as the method is nearly consistent when the dimension is high.

Next, we propose a \emph{weighted \& interpolated nearest neighbor} (\emph{wiNN}) scheme based on singular weight functions (\Cref{sec:nn}). The resulting function is somewhat less natural than that obtained by simplicial interpolation, but like the Hilbert kernel regression estimate, the prediction rule is statistically consistent in any dimension. Interestingly, conditions on the weights to ensure consistency become less restrictive in higher dimension---another ``blessing of dimensionality''. Our analysis  provides the first known non-asymptotic rates of convergence to the Bayes risk for an interpolated predictor, as well as tighter bounds under margin conditions for classification. In fact, the rate achieved by wiNN regression is statistically optimal under a standard minimax setting\footnote{An earlier version of this article paper contained a bound with a worse rate of convergence based on a loose analysis. The subsequent work~\cite{belkin2018does}  found that a different Nadaraya-Watson kernel regression estimate (with a singular kernel) could achieve the optimal convergence rate; this inspired us to seek a tighter analysis of our wiNN scheme.}. 

Our results also suggest an explanation for the phenomenon of adversarial examples~\cite{szegedy2013adversarial}, which are seemingly ubiquitous in modern machine learning. In \Cref{sec:adversarial}, we argue that interpolation inevitably results in adversarial examples in the presence of any amount of label noise. When these schemes are  consistent or nearly consistent, the set of adversarial examples (where the interpolating classifier disagrees with the Bayes optimal) has small measure but is asymptotically dense. 
Our analysis is consistent with the empirical observations that such examples are difficult to find by random sampling~\cite{fawzi2016robustness}, but are easily discovered using  targeted optimization procedures, such as Projected Gradient Descent~\cite{madry2017towards}.

Finally, we discuss the difference between {\it direct} and {\it inverse} interpolation schemes; and make some connections to kernel machines, and random forests in (\Cref{sec:discussion}). 

All proofs are given in \Cref{sec:proofs}. We informally discuss some connections to graph-based semi-supervised learning in \Cref{sec:kernel-regression}.

\section{Preliminaries}
\label{sec:prelim}

The goal of regression and classification is to construct a predictor $\hat f$ given labeled training data $(x_1,y_1),\dotsc,(x_n,y_n) \in \R^d \times \R$, that performs well on unseen test data, which are typically assumed to be sampled from the same distribution as the training data. 
In this work, we focus on {\it interpolating methods} that construct predictors $\hat f$ satisfying $\hat f(x_i) = y_i$ for all $i=1,\dotsc,n$.

Algorithms that perfectly fit training data are not common in statistical and machine learning literature. The prominent exception is the \emph{nearest neighbor rule}, 
which is among of the oldest and best-understood classification methods. Given a training set of labeled example, the nearest neighbor rule predicts the label of a new point $x$ to be the same as that of the nearest point to $x$ within the training set. Mathematically,
the predicted label of $x \in \R^d$ is $y_i$, where $i \in \argmin_{i'=1,\dotsc,n} \|x-x_{i'}\|$. (Here, $\|\cdot\|$ always denotes the Euclidean norm.) As discussed above, the classification risk of the nearest neighbor rule is asymptotically bounded by \emph{twice} the Bayes (optimal) risk~\citep{cover1967nearest}. The nearest neighbor rule provides an important intuition that such classifiers can (and perhaps should) be constructed using local information in the feature space.

In this paper, we analyze two interpolating schemes, one based  on triangulating and constructing the simplicial interpolant for the data, and another, based on weighted nearest neighbors with singular weight function. 

\subsection{Statistical model and notations}

We assume $(X_1,Y_1),\dotsc,(X_n,Y_n),(X,Y)$ are iid labeled examples from $\R^d \times [0,1]$. Here, $((X_i,Y_i))_{i=1}^n$ are the iid training data, and $(X,Y)$ is an independent test example from the same distribution. Let $\mu$ denote the marginal distribution of $X$, with support denoted by $\supp(\mu)$; and let $\eta \colon \R^d \to \R$ denote the conditional mean of $Y$ given $X$, i.e., the function given by $\eta(x) := \E(Y \mid X = x)$. For (binary) classification, we assume the range of $Y$ is $\{0,1\}$ (so $\eta(x) = \P(Y=1 \mid X=x)$), and we let $f^* \colon \R^d \to \{0,1\}$ denote the Bayes optimal classifier, which is defined by $f^*(x) := \ind{\eta(x) > 1/2}$. This classifier minimizes the risk $\risk(f) := \E[\ind{f(X) \neq Y}] = \P(f(X) \neq Y)$ under zero-one loss, while the conditional mean function $\eta$ minimizes the risk $\mse(g) := \E[(g(X) - Y)^2]$ under squared loss.

The goal of our analyses will be to establish excess risk bounds for empirical predictors ($\hat f$ and $\hat \eta$, based on training data) in terms of their agreement with $f^*$ for classification and with $\eta$ for regression. For classification, the expected risk can be bounded as $\E[\risk(\hat f)] \leq \risk(f^*) + \P(\hat f(X) \neq f^*(X))$, while for regression, the expected mean squared error is precisely $\E[\mse(\hat\eta(X))] = \mse(\eta) + \E[ (\hat\eta(X) - \eta(X)^2  ]$. Our analyses thus mostly focus on $\P(\hat f(X) \neq f^*(X))$ and $\E[(\hat\eta(X) - \eta(X))^2]$ (where the probability and expectations are with respect to both the training data and the test example).

\subsection{Smoothness, margin, and regularity conditions}
Below we list some standard conditions needed for further development.
\begin{description}
  \item[$(A,\alpha)$-smoothness (H\"older).]
    For all $x,x'$ in the support of $\mu$,
    \[ |\eta(x) - \eta(x')| \leq A\cdot\|x-x'\|^\alpha . \]

  \item[$(B,\beta)$-margin condition~\citep{mammen1999smooth,tsybakov2004optimal}.]
    For all $t\geq0$,
    \[ \mu(\{ x \in \R^d : |\eta(x) - 1/2| \leq t \}) \leq B\cdot t^\beta . \]

  \item[$h$-hard margin condition~\citep{massart2006risk}.]
    For all $x$ in the support of $\mu$,
    \[ |\eta(x) - 1/2| \geq h > 0 . \]

  \item[$(c_0,r_0)$-regularity~\citep{audibert2007fast}.] There exist $c_0 > 0$ and $r_0 > 0$ such that \[ \lambda( \supp(\mu) \cap \Ball(x,r) ) \geq c_0 \lambda( \Ball(x,r) ) , \quad 0 < r \leq r_0 , \ x \in \supp(\mu) , \] where $\lambda$ is the Lebesgue measure on $\R^d$, and $\Ball(c,r) := \{ x \in \R^d : \|x-c\| \leq r \}$ denotes the ball of radius $r$ around $c$.
\end{description}

The regularity condition from \citet{audibert2007fast} is not very restrictive. For example, if $\supp(\mu) = \Ball(0,1)$, then $c_0 \approx 1/2$ and $r_0 \geq 1$.

\paragraph{Uniform distribution condition.} In what follows, we mostly assume uniform marginal distribution $\mu$ over a certain domain. This is done for the sake of simplicity and is not an essential condition. For example, in every statement the uniform measure can be substituted (with a potential change of constants) by an arbitrary measure with density bounded from below.

\section{Interpolating scheme based on multivariate triangulation}
\label{sec:simplex}

In this section, we describe and analyze an interpolating scheme based on multivariate triangulation. Our main interest in this scheme is in its natural geometric properties and the risk bounds for regression and classification which compare favorably to those of the original nearest neighbor rule (despite the fact that neither is statistically consistent in general).

\begin{figure}
  \centering
  \begin{tabular}{cc}
    \includegraphics[width=0.19\textwidth]{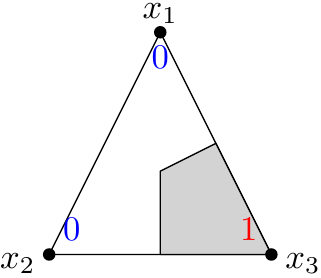}
    & \includegraphics[width=0.19\textwidth]{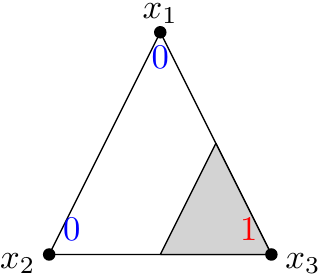} \\
    {\footnotesize Nearest neighbor}
    & {\footnotesize Simplicial interpolation}
  \end{tabular}
  \caption{Comparison of nearest neighbor and simplicial interpolation. Consider three labeled examples from $\R^2 \times \{0,1\}$: $(x_1,0)$, $(x_2,0)$, $(x_3,1)$. Depicted in gray are the regions (within $\convhull(x_1,x_2,x_3)$) on which the nearest neighbor classifier and simplicial interpolation classifier predict $1$.}
  \label{fig:simplex-vs-nn}
\end{figure}

\subsection{Definition and basic properties}
\label{sec:simplex-defn}

We define an interpolating function $\hat\eta \colon \R^d \to \R$ based on training data $((x_i,y_i))_{i=1}^n$ from $\R^d \times \R$ and a (multivariate) triangulation scheme $T$. This function is  \emph{simplicial interpolation}~\citep{halton1991simplicial,davies1997multidimensional}. We assume without loss of generality that the (unlabeled) examples $x_1,\dotsc,x_n$ span $\R^d$. The triangulation scheme $T$ partitions the convex hull $\wh{C} := \convhull(x_1,\dotsc,x_n)$ of the unlabeled examples into non-degenerate simplices\footnote{We say a simplex in $\R^d$ is non-degenerate if it has non-zero $d$-dimensional Lebesgue measure.} with vertices at the unlabeled examples; these simplices intersect only at ${<}d$-dimensional faces. Each $x \in \wh{C}$ is contained in at least one of these simplices; let $U_T(x)$ denote the set of unlabeled examples $(x_{(1)},\dotsc,x_{(d+1)})$ that are vertices for a simplex containing $x$. Let $L_T(x)$ be the corresponding set of labeled examples $((x_{(1)},y_{(1)}),\dotsc,(x_{(d+1)},y_{(d+1)}))$.\footnote{Of course, some points $x$ have more than one containing simplex; we will see that the ambiguity in defining $U_T(x)$ and $L_T(x)$ for such points is not important.} For any point $x \in \wh{C}$, we define $\hat\eta(x)$ to be the unique \emph{linear interpolation} of $L_T(x)$ at $x$ (defined below). For points $x \notin \wh{C}$, we arbitrarily assert $U_T(x) = L_T(x) = \bot$, and define $\hat\eta(x) := 1/2$.

Recall that a \emph{linear (affine) interpolation} of $(v_1,y_1),\dotsc,(v_{d+1},y_{d+1}) \in \R^d \times \R$ at a new point $x \in \R^d$ is given by the system of equations $\hat\beta_0 + x^\T\hat\beta$, where $(\hat\beta_0,\hat\beta)$ are (unique) solutions to the system of equations
\[
  \hat\beta_0 + v_i^\T\hat\beta = y_i, \quad i=1,\dotsc,d+1 .
\]

The predictions of the plug-in classifier based on simplicial interpolation are qualitatively very different from those of the nearest neighbor rule. This is true even when restricting attention to a single simplex. Suppose, for example, that $\eta(x) < 1/2$ for all $x \in \convhull(x_1,\dotsc,x_{d+1})$, so the Bayes classifier predicts $0$ for all $x$ in the simplex. On the other hand, due to label noise, we may have some $y_i = 1$. Suppose in fact that only $y_{d+1} = 1$, while $y_i = 0$ for all $i=1,\dotsc,d$. In this scenario (depicted in \Cref{fig:simplex-vs-nn} for $d=2$), the nearest neighbor rule (erroneously) predicts $1$ on a larger fraction of the simplex than the plug-in classifier based on $\hat\eta$. The difference can be striking in high dimensions: $1/d$ for nearest neighbor versus $1/2^d$ for simplicial interpolation in $d$-dimensional version of \Cref{fig:simplex-vs-nn}. This provides an intuition why, in contrast to the nearest neighbor rule, simplicial interpolation can yield to classifiers that are nearly optimal in high dimensions.

\begin{proposition} \label{prop:simplex}
  Suppose $v_1,\dotsc,v_{d+1}$ are vertices of a non-degenerate simplex in $\R^d$, and $x$ is in their convex hull with barycentric coordinates $(w_1,\dotsc,w_{d+1})$---i.e., $w_i\geq0$, $\sum_{i=1}^{d+1} w_i = 1$, and $x = \sum_{i=1}^d w_iv_i$. The linear interpolation of $(v_1,y_1),\dotsc,(v_{d+1},y_{d+1}) \in \R^d \times \R$ at $x$ is given by $\sum_{i=1}^{d+1} w_i y_i$.
\end{proposition}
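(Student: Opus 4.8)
The plan is to exploit the one property that characterizes affine functions: they commute with affine combinations of points. Write the linear interpolant as $g(z) = \hat\beta_0 + z^\T\hat\beta$, where $(\hat\beta_0,\hat\beta)$ solves the linear system $\hat\beta_0 + v_i^\T\hat\beta = y_i$, $i = 1,\dotsc,d+1$. First I would record that this system has a unique solution: its coefficient matrix is the $(d+1)\times(d+1)$ matrix whose $i$-th row is $(1,\,v_i^\T)$, and this matrix is invertible precisely because $v_1,\dotsc,v_{d+1}$ are the vertices of a non-degenerate simplex (equivalently, $v_2-v_1,\dotsc,v_{d+1}-v_1$ are linearly independent). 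So $g$ is well-defined, and it is the object the proposition refers to.

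Next, using $x = \sum_{i=1}^{d+1} w_i v_i$ together with $\sum_{i=1}^{d+1} w_i = 1$, I would simply compute
\[
  g(x) \;=\; \hat\beta_0 + \Bigl(\sum_{i=1}^{d+1} w_i v_i\Bigr)^{\!\T}\hat\beta
       \;=\; \sum_{i=1}^{d+1} w_i\hat\beta_0 + \sum_{i=1}^{d+1} w_i v_i^\T\hat\beta
       \;=\; \sum_{i=1}^{d+1} w_i\bigl(\hat\beta_0 + v_i^\T\hat\beta\bigr)
       \;=\; \sum_{i=1}^{d+1} w_i y_i,
\]
where the second equality uses $\sum_i w_i = 1$ and the last uses the interpolation constraints. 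That is the entire argument. An alternative phrasing I might prefer, since it foregrounds uniqueness rather than the explicit solution, is: define $\tilde g(z) := \sum_{i=1}^{d+1} w_i(z)\,y_i$, where $w_i(z)$ is the $i$-th barycentric coordinate of $z$ relative to $v_1,\dotsc,v_{d+1}$; each $w_i(\cdot)$ is an affine function of $z$ (it solves the same invertible system with $z$ in place of $x$), so $\tilde g$ is affine, and $w_i(v_j) = \ind{i=j}$ gives $\tilde g(v_j) = y_j$. By the uniqueness established above, $\tilde g = g$, and evaluating at $x$ yields the claim.

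There is essentially no obstacle here. The only point that genuinely uses a hypothesis is the invertibility of the $(d+1)\times(d+1)$ vertex matrix, which is exactly what non-degeneracy of the simplex guarantees; everything else is the bookkeeping identity $w_i(v_j) = \ind{i=j}$ for barycentric coordinates, immediate from their definition, plus the trivial fact that affine maps respect affine combinations. The proposition is, in effect, the observation that barycentric coordinates are the coordinate system in which affine interpolation becomes diagonal.
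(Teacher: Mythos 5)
Your proof is correct and is essentially the paper's own argument: both rest on the invertibility of the $(d+1)\times(d+1)$ vertex matrix (guaranteed by non-degeneracy) and the identity $g(x)=\sum_i w_i(\hat\beta_0+v_i^\T\hat\beta)=\sum_i w_i y_i$, which the paper simply writes in matrix form as $(w^\T A)(A^{-1}b)=w^\T b$. The only cosmetic difference is that the paper verifies invertibility via the lifted simplex $(1,v_i)$ and a volume argument, whereas you cite linear independence of $v_2-v_1,\dotsc,v_{d+1}-v_1$; these are equivalent.
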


One consequence of \Cref{prop:simplex} for $\hat\eta$ is that if $x$ is contained in two adjacent simplices (that share a ${<}d$-dimensional face), then it does not matter which simplex is used to define $U_T(x)$; the value of $\hat\eta(x)$ is the same in any case. Geometrically, we see that the restriction of the interpolating linear function to a face of the simplex coincides with the interpolating linear function constructed on a sub-simplex formed by that face. 
Therefore, we deduce that $\hat\eta$ is a piecewise linear and continuous interpolation of the data $(x_1,y_1),\dotsc,(x_n,y_n)$ on $\convhull(x_1,\dotsc,x_n)$.

We note that our prediction rule requires only locating the vertices of the simplex containing a given point, rather than the considerably harder problem of constructing a full triangulation. In fact, locating the containing simplex in a Delaunay triangulation reduces to solving polynomial-size linear programs~\cite{fukuda2004frequently}; in contrast, computing the full Delaunay triangulation has complexity exponential in the (intrinsic) dimension~\cite{amenta07}.

\subsection{Mean squared error}
\label{sec:simplex-mse}

We first illustrate the behavior of simplicial interpolation in a simple regression setting. Here, $(X_1,Y_1),\dotsc,(X_n,Y_n),(X,Y)$ are iid labeled examples from $\R^d \times [0,1]$. For simplicity, we assume that $\mu$ is the uniform distribution on a full-dimensional compact and convex subset of $\R^d$.

In general, each $Y_i$ may deviate from its conditional mean $\eta(X_i)$ by a non-negligible amount, and hence any function that interpolates the training data is ``fitting noise''. Nevertheless, in high dimension, the mean squared error of such a function will be quite close to that of the (optimal) conditional mean function.

\begin{theorem} \label{thm:simplex-mse}
  Assume $\mu$ is the uniform distribution on a full-dimensional compact and convex subset of $\R^d$; $\eta$ satisfies the $(A,\alpha)$-smoothness condition; and the conditional variance function $x \mapsto \var(Y \mid X=x)$  satisfies the $(A',\alpha')$-smoothness condition. Let $\hat\delta_T := \sup_{x \in \wh{C}} \diam(\convhull(U_T(x)))$ denote the maximum diameter of any simplex in the triangulation $T$ derived from $X_1,\dotsc,X_n$. Then
  \[
    \E[ (\hat\eta(X) - \eta(X))^2 ]
    \leq
    \frac14 \E[ \mu(\R^d \setminus \wh C) ]
    + A^2 \E[ \hat\delta_T^{2\alpha} ]
    + \frac{2}{d+2} A' \E[ \hat\delta_T^{\alpha'} ]
    + \frac{2}{d+2} \E[ (Y - \eta(X))^2 ] .
  \]
\end{theorem}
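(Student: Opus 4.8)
The plan is to condition on the training sample $(X_1,Y_1),\dots,(X_n,Y_n)$ (so that $\hat\eta$ and the triangulation $T$, hence $\hat\delta_T$, are fixed) and bound $\E_X[(\hat\eta(X)-\eta(X))^2]$ by splitting on whether the test point lies in $\wh C := \convhull(X_1,\dots,X_n)$. Outside $\wh C$ we have $\hat\eta(X)=1/2$ and $\eta(X)\in[0,1]$, so the integrand is at most $1/4$; this contributes at most $\tfrac14\mu(\R^d\setminus\wh C)$, which becomes the first term after taking expectation over the training data. For $\mu$-almost every $X\in\wh C$, the point $X$ lies in the interior of a single simplex with vertices $X_{(1)},\dots,X_{(d+1)}$ and barycentric coordinates $w_1,\dots,w_{d+1}$, and by \Cref{prop:simplex} we have $\hat\eta(X)=\sum_{i=1}^{d+1} w_i Y_{(i)}$; the ambiguity on the $\mu$-null set of shared faces is immaterial.

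Next I would further condition on $X$, leaving only the vertex labels random. Since the training examples are iid, given $X_1,\dots,X_n$ the labels $Y_j$ are independent with mean $\eta(X_j)$ and variance $\var(Y\mid X=X_j)$, so the bias--variance decomposition gives
\[
  \E\bigl[(\hat\eta(X)-\eta(X))^2 \mid X_1,\dots,X_n, X\bigr]
  = \Bigl(\textstyle\sum_i w_i\,\eta(X_{(i)}) - \eta(X)\Bigr)^2 + \sum_i w_i^2\,\var(Y\mid X=X_{(i)}).
\]
For the bias term, $(A,\alpha)$-smoothness together with $\|X_{(i)}-X\|\le\hat\delta_T$ (both points lie in the containing simplex) gives the pointwise bound $A^2\hat\delta_T^{2\alpha}$, hence at most $A^2\E[\hat\delta_T^{2\alpha}]$ after integrating over $X\in\wh C$ (using $\mu(\wh C)\le1$) and over the training data. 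The substance is in the variance term, and the key point is that it must be integrated over $X$ \emph{before} applying smoothness: on each simplex $S$ of the triangulation, a $\mu$-uniform $X$ has barycentric coordinates distributed as $\Dirichlet(1,\dots,1)$, so $\int_S w_i(x)^2\,d\mu(x) = \tfrac{2}{(d+1)(d+2)}\,\mu(S)$ for every $i$. Summing the $d{+}1$ coordinates and then over all simplices (which partition $\wh C$ up to $\mu$-null faces) turns $\E_X[\sum_i w_i^2\,\var(Y\mid X=X_{(i)})\,\ind{X\in\wh C}]$ into $\tfrac{2}{(d+1)(d+2)}\sum_S \mu(S)\sum_{i}\var(Y\mid X=v_i^S)$. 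Finally $(A',\alpha')$-smoothness of $x\mapsto\var(Y\mid X=x)$ lets me replace each vertex value $\var(Y\mid X=v_i^S)$ by the $\mu|_S$-average of $\var(Y\mid X=\cdot)$ plus $A'\hat\delta_T^{\alpha'}$; after telescoping $\sum_S\int_S(\cdot)\,d\mu = \int_{\wh C}(\cdot)\,d\mu$ and $\sum_S\mu(S)=\mu(\wh C)\le1$, this yields at most $\tfrac{2}{d+2}\bigl(\E[(Y-\eta(X))^2] + A'\E[\hat\delta_T^{\alpha'}]\bigr)$, matching the last two terms.

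The one nonroutine ingredient is the second-moment identity for barycentric coordinates, $\E[w_i^2]=\tfrac{2}{(d+1)(d+2)}$ for a uniformly random point in a simplex (proved by affine invariance from the $\Dirichlet(1,\dots,1)$ case), which is exactly where the favorable $1/(d+2)$ factor---the ``blessing of dimensionality'' for regression---originates; everything else is bookkeeping, modulo checking that distinct simplices overlap only on $\mu$-null sets and that the smoothness conditions apply because every relevant point lies in $\supp(\mu)$. I expect the main obstacle to be resisting the temptation to bound the variance term pointwise (where $\sum_i w_i^2\le1$ only recovers the weaker constant $1$ in place of $2/(d+2)$, since $\sum_i w_i^2$ and $\var(Y\mid X=\cdot)$ may be correlated within a simplex): one must carry the $w_i^2$ integration through first. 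Since uniformity of $\mu$ enters only through these two facts, the argument extends verbatim, up to constants, to any $\mu$ with density bounded below on a convex domain.
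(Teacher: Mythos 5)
Your proposal is correct and follows essentially the same route as the paper's proof: split on $X\in\wh C$ versus not, apply the bias--variance decomposition on the containing simplex, bound the bias by $(A,\alpha)$-smoothness, and use the $\Dirichlet(1,\dotsc,1)$ second moment $\E[W_i^2]=\tfrac{2}{(d+1)(d+2)}$ together with $(A',\alpha')$-smoothness of the conditional variance for the noise term. Your explicit summation over simplices (rather than the paper's conditioning on the containing simplex) and the triangle-inequality bound for the bias (rather than Jensen) are only cosmetic differences.
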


\begin{corollary} \label{cor:simplex-mse}
  In addition to the assumptions in \Cref{thm:simplex-mse}, assume $\supp(\mu)$ is a simple polytope in $\R^d$ and $T$ is constructed using Delaunay triangulation. Then
  \[
    \limsup_{n\to\infty}
    \E[ (\hat\eta(X) - \eta(X))^2 ]
    \leq \frac{2}{d+2} \E[ (Y - \eta(X))^2 ] .
  \]
\end{corollary}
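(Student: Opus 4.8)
I would prove this by revisiting the proof of \Cref{thm:simplex-mse} rather than by plugging into its statement. The tempting shortcut — show $\E[\hat\delta_T^{2\alpha}]\to0$ and invoke \Cref{thm:simplex-mse} — is in fact delicate: the Delaunay triangulation of a dense uniform sample in a polytope always carries a few very flat ``sliver'' cells hugging the boundary (for instance, in $\R^2$, the cell spanned by the three lowest sample points has, when they are nearly collinear, an enormous circumcircle whose interior barely grazes the polytope), so $\hat\delta_T$ can fail to go to $0$. What saves us is that these cells are so thin that their total $\mu$-measure still vanishes. So the plan is: fix $\epsilon>0$, let $G_\epsilon\subseteq\wh{C}$ be the union of all Delaunay cells of diameter $<\epsilon$, and decompose, according to whether $X$ lies in a cell of diameter $<\epsilon$, in a cell of diameter $\ge\epsilon$, or outside $\wh{C}$,
\[
  \E[(\hat\eta(X)-\eta(X))^2]
  = \E[(\hat\eta(X)-\eta(X))^2\,\ind{X\in G_\epsilon}]
  + \E[(\hat\eta(X)-\eta(X))^2\,\ind{X\in\wh{C}\setminus G_\epsilon}]
  + \tfrac14\E[\mu(\R^d\setminus\wh{C})].
\]

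For the first (``small cell'') term, I would repeat the argument behind \Cref{thm:simplex-mse} almost verbatim: there, $\hat\delta_T$ entered only through the H\"older bound $|\eta(X_{(i)})-\eta(X)|\le A\,\diam(\cdot)^\alpha$ and the smoothness of $x\mapsto\var(Y\mid X=x)$ across a cell, which are now controlled by $\diam(\cdot)<\epsilon$; the bias/noise cross-term vanishes in conditional expectation given the $X_i$; and the remaining noise term produces the constant $\tfrac{2}{d+2}$ because a uniform point in \emph{any} non-degenerate simplex has barycentric coordinates with $\E[\sum_i w_i^2]=\tfrac{2}{d+2}$. The net bound on the small-cell term is $A^2\epsilon^{2\alpha}+\tfrac{2}{d+2}A'\epsilon^{\alpha'}+\tfrac{2}{d+2}\E[(Y-\eta(X))^2]$. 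The last term of the display tends to $0$ because $\wh{C}$ exhausts $K:=\supp(\mu)$ almost surely and $\mu(\R^d\setminus\wh{C})\le1$ (dominated convergence). Since $(\hat\eta-\eta)^2\le1$, the middle (``large cell'') term is at most $\E[\mu(\wh{C}\setminus G_\epsilon)]$, so the corollary reduces to showing $\E[\mu(\wh{C}\setminus G_\epsilon)]\to0$.

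That reduction is the crux, and it is where the simple-polytope hypothesis is used. Two ingredients suffice: (i) the covering radius $\rho_n:=\sup_{x\in K}\min_i\|x-X_i\|$ of a uniform sample tends to $0$ almost surely; (ii) $K$ has a uniform interior-cone property — there is $\gamma=\gamma(K)>0$ so that $\Ball(z,r)\cap K$ contains a ball of radius $\gamma r$ for every $z\in\partial K$ and every small $r>0$. Let $S$ be a Delaunay cell with circumcenter $c$, circumradius $R\ge\epsilon/2$, and empty circumball. If $c\in K$, then once $\rho_n<\epsilon/2$ the ball $\Ball(c,\epsilon/2)\subseteq\Ball(c,R)$ contains a sample, contradicting emptiness; so $c\notin K$. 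Writing $h:=\operatorname{dist}(c,K)$ and $\bar c$ for the nearest point of $K$ to $c$, the empty ball $\Ball(\bar c,R-h)\subseteq\Ball(c,R)$ forces $\gamma(R-h)\le\rho_n$, hence $R-h\le\rho_n/\gamma\to0$. The vertices of $S$ lie in $K$, on the $K$-side of the supporting hyperplane of $K$ at $\bar c$, and at distance $R$ from $c$; hence they lie within $R-h$ of that hyperplane, which — using that $K$ is a polytope — traps $S$ inside $\{x\in K:\operatorname{dist}(x,\partial K)\le C(K)\,\rho_n/\gamma\}$, a set of $\mu$-measure $O(\rho_n)$. Thus $\mu(\wh{C}\setminus G_\epsilon)\le C(K)\gamma^{-1}\rho_n$ as soon as $\rho_n<\epsilon/2$, and $\E[\mu(\wh{C}\setminus G_\epsilon)]\to0$ by dominated convergence. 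Taking $\limsup_n$ in the display kills its last two terms, leaving $A^2\epsilon^{2\alpha}+\tfrac{2}{d+2}A'\epsilon^{\alpha'}+\tfrac{2}{d+2}\E[(Y-\eta(X))^2]$; letting $\epsilon\downarrow0$ gives the corollary.

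The main obstacle is the last geometric step: passing from ``the vertices of $S$ sit near a supporting hyperplane of $K$'' to ``$S$ sits in a vanishing-measure neighborhood of $\partial K$.'' A supporting hyperplane of a polytope can touch it only along a low-dimensional face, and one must check that the sliver cut off near such a face both has small volume and stays close to $\partial K$, with constants $\gamma(K)$ and $C(K)$ that are genuinely finite — exactly the degeneracy that ``simple polytope'' excludes, since its vertex cones are linear images of an orthant. Everything else is a routine modification of the proof of \Cref{thm:simplex-mse}.
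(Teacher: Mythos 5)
Your proposal is correct, and it does \emph{not} follow the paper's route --- in fact your suspicion about the shortcut is well founded. The paper proves \Cref{cor:simplex-mse} precisely by the shortcut you reject: it plugs \Cref{thm:polytope} (mass outside $\wh{C}$) and \Cref{lem:delaunay} into the statement of \Cref{thm:simplex-mse}, where \Cref{lem:delaunay} asserts that an $\epsilon$-dense sample has all Delaunay simplices of diameter at most $2\epsilon$, so that $\hat\delta_T\to0$. As you point out, that assertion is problematic: every facet of the convex hull of the sample is a Delaunay face, and an $\epsilon$-dense configuration can have hull facets of length $\Theta(1)$ hugging a flat piece of the boundary (e.g.\ a fine grid at heights $\geq\epsilon/4$ in the unit square plus two points at height $\epsilon/8$ near opposite ends of the bottom edge), so the deterministic claim of \Cref{lem:delaunay} fails; the dual argument there implicitly assumes the two Voronoi cells meet \emph{inside} $C$, whereas for boundary slivers they meet only far outside $C$. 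Your decomposition repairs exactly this: the computation in \Cref{thm:simplex-mse} is local to the containing simplex, so it goes through verbatim on the small-cell event with $\epsilon$ in place of $\hat\delta_T$ (the Dirichlet barycentric coordinates and the $\tfrac{2}{d+2}$ constant are per-simplex facts), the hull-complement term vanishes by \Cref{thm:polytope} or by almost-sure exhaustion, and your empty-circumball argument correctly confines every large Delaunay cell to a collar of $\partial K$ of width $O(\rho_n/\gamma)$, whose $\mu$-measure vanishes; boundedness of $(\hat\eta-\eta)^2$ there uses $Y\in[0,1]$, which the setting provides. One simplification: the step you flag as the main obstacle is actually immediate --- any point of the convex body $K$ lying within distance $t$ of a supporting hyperplane is within distance $t$ of $\partial K$ (walk along the outward normal), so no face-by-face analysis of the simple polytope is needed there; the polytope structure (or just convexity, compactness and the $(c_0,r_0)$-regularity already assumed) is only needed for the interior-cone constant $\gamma$ and for the $O(\rho_n)$ bound on the collar's measure. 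In short, the paper's proof is shorter but leans on \Cref{lem:delaunay}, which does not hold for the standard Delaunay triangulation near flat boundary pieces; your argument is longer but sound, and it is the kind of boundary-sliver handling needed to make the corollary's proof airtight.
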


\subsection{Classification risk}
\label{sec:simplex-risk}

We now analyze the statistical risk of the plug-in classifier based on $\hat\eta$, given by
\begin{align*}
  \hat f(x) & := \ind{\hat\eta(x) > 1/2} .
\end{align*}
As in \Cref{sec:simplex-mse}, we assume that $\mu$ is the uniform distribution on a full-dimensional compact and convex subset of $\R^d$.

We first state an easy consequence of \Cref{cor:simplex-mse} using known properties of plug-in classifiers.
\begin{corollary} \label{cor:simplex-risk}
  Under the same conditions as \Cref{cor:simplex-mse},
  \[
    \limsup_{n\to\infty} \E[\risk(\hat f) - \risk(f^*)]
    \leq \sqrt{\frac{8}{d+2} \E[ (Y- \eta(X))^2 ]} .
  \]
\end{corollary}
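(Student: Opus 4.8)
The plan is to derive \Cref{cor:simplex-risk} from \Cref{cor:simplex-mse} using the standard relationship between the excess classification risk of a plug-in classifier and the $L^2$ distance of the regression estimate to $\eta$. First I would recall the classical pointwise bound: for any $x$, if the plug-in classifier $\hat f(x) = \ind{\hat\eta(x) > 1/2}$ disagrees with the Bayes classifier $f^*(x) = \ind{\eta(x) > 1/2}$, then $\hat\eta(x)$ and $\eta(x)$ lie on opposite sides of $1/2$, which forces $|\hat\eta(x) - \eta(x)| \geq |\eta(x) - 1/2|$. Combined with the well-known identity/inequality for the conditional excess risk of a classifier $g$, namely $\risk(g) - \risk(f^*) = \E[\, |2\eta(X) - 1| \cdot \ind{g(X) \neq f^*(X)}\,]$, this gives
\[
  \risk(\hat f) - \risk(f^*)
  = \E\big[\, |2\eta(X)-1|\cdot \ind{\hat f(X)\neq f^*(X)} \big]
  \leq 2\,\E\big[\, |\eta(X)-1/2|\cdot \ind{|\hat\eta(X)-\eta(X)|\geq |\eta(X)-1/2|}\big]
  \leq 2\,\E\big[ |\hat\eta(X)-\eta(X)| \big].
\]

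Next I would apply the Cauchy–Schwarz (equivalently Jensen) inequality to pass from the $L^1$ quantity to the $L^2$ quantity that \Cref{cor:simplex-mse} controls:
\[
  \E\big[ |\hat\eta(X) - \eta(X)| \big]
  \leq \sqrt{\, \E\big[ (\hat\eta(X) - \eta(X))^2 \big] \,}.
\]
Taking $\limsup_{n\to\infty}$ on both sides and substituting the bound from \Cref{cor:simplex-mse}, $\limsup_{n\to\infty}\E[(\hat\eta(X)-\eta(X))^2] \leq \frac{2}{d+2}\E[(Y-\eta(X))^2]$, yields
\[
  \limsup_{n\to\infty} \E[\risk(\hat f) - \risk(f^*)]
  \leq 2\sqrt{\frac{2}{d+2}\,\E[(Y-\eta(X))^2]}
  = \sqrt{\frac{8}{d+2}\,\E[(Y-\eta(X))^2]},
\]
which is exactly the claimed inequality. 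One small technical point to handle is that the expectations here are over both the training data and the test point, so the pointwise argument should be made conditionally on the training data (hence on $\hat\eta$) and then integrated; since all the inequalities above hold conditionally, and $\limsup$ commutes appropriately with the outer expectation via the stated convergence in \Cref{cor:simplex-mse}, this causes no difficulty.

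I do not anticipate a genuine obstacle here — this is a routine corollary. The only thing requiring a little care is citing or re-deriving the excess-risk identity $\risk(g) - \risk(f^*) = \E[|2\eta(X)-1|\,\ind{g(X)\neq f^*(X)}]$ and the observation that disagreement implies $|\hat\eta(X) - \eta(X)| \geq |\eta(X) - 1/2|$; both are standard facts about plug-in rules (as used, e.g., in the margin-condition literature cited in the preliminaries), so I would simply state them, possibly with a one-line justification, rather than belabor them. The constant $\sqrt{8/(d+2)}$ falls out immediately once the factor of $2$ from the pointwise bound and the square root from Cauchy–Schwarz are combined with the $2/(d+2)$ from \Cref{cor:simplex-mse}.
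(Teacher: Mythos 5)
Your proof is correct and is essentially the argument the paper intends: the paper simply invokes the ``known properties of plug-in classifiers,'' namely $\risk(\hat f)-\risk(f^*) \leq 2\,\E[\,|\hat\eta(X)-\eta(X)|\,] \leq 2\sqrt{\E[(\hat\eta(X)-\eta(X))^2]}$, and combines this with \Cref{cor:simplex-mse}, exactly as you do (with the constant $2\sqrt{2/(d+2)}=\sqrt{8/(d+2)}$ working out the same way). Your explicit derivation of the plug-in bound via the excess-risk identity and the disagreement-implies-$|\hat\eta-\eta|\geq|\eta-1/2|$ observation is just a spelled-out version of the standard fact the paper cites, so there is no substantive difference.
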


When the conditional mean function satisfies a margin condition, the $1/\sqrt{d}$ in \Cref{cor:simplex-risk} can  be replaced with a quantity that is exponentially small in $d$, as we show next.

\begin{theorem} \label{thm:simplex-risk-massart}
Suppose  $\eta$ satisfies the $h$-hard margin condition.
As above,  assume $\mu$ is the uniform distribution on a simple polytope in $\R^d$, and $T$ is constructed using Delaunay triangulation. Furthermore, assume  $\eta$ is Lipschitz away from the class boundary (i.e., on $\{ x \in \supp(\mu) : |\eta(x) - 1/2|> 0 \}$) and that the class boundary $\partial$ has finite $d-1$-dimensional volume\footnote{Specifically, $\lim_{\epsilon \to 0} \mu(\partial +\Ball(0,\epsilon))=0$, where ``$+$'' denotes the Minkowski sum, i.e., the $\epsilon$-neighborhood of $\partial$.}. 
  Then, for some absolute constants $c_1,c_2>0$ (which may depend on $h$),
  \[
    \limsup_{n\to\infty} \E[ \risk(\hat f) ] \leq \risk(f^*) \cdot (1 + c_1 e^{-c_2d}) .
  \]
\end{theorem}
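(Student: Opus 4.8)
The plan is to start from the bound $\E[\risk(\hat f)]\le\risk(f^*)+\P(\hat f(X)\neq f^*(X))$ recorded in \Cref{sec:prelim} and to prove $\limsup_{n\to\infty}\P(\hat f(X)\neq f^*(X))\le c_1 e^{-c_2 d}\risk(f^*)$. Condition throughout on $X_1,\dots,X_n$, so the triangulation $T$ is fixed. Under the $h$-hard margin condition, $\supp(\mu)$ is, up to the $\mu$-null set $\partial$, the disjoint union of $P_+:=\{\eta\ge\tfrac12+h\}$ (where $f^*\equiv1$) and $P_-:=\{\eta\le\tfrac12-h\}$ (where $f^*\equiv0$), and on each of $P_+,P_-$ the function $q:=\min(\eta,1-\eta)$ is $L$-Lipschitz by the ``Lipschitz away from the class boundary'' hypothesis. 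Call a simplex of $T$ \emph{good} if it lies entirely in $P_+$ or entirely in $P_-$, and \emph{bad} otherwise; a bad simplex is connected and not contained in either region, hence meets $\partial$, so it lies in $\partial+\Ball(0,\hat\delta_T)$. Splitting $\supp(\mu)$ into $\R^d\setminus\wh{C}$, the bad simplices, and the good simplices gives
\[
  \P(\hat f(X)\neq f^*(X))\le\E\big[\mu(\R^d\setminus\wh{C})\big]+\E\big[\mu(\partial+\Ball(0,\hat\delta_T))\big]+\E\Big[\textstyle\sum_{\sigma\text{ good}}\mu(\{x\in\sigma:\hat f(x)\neq f^*(x)\})\Big].
\]
By the facts already used for \Cref{cor:simplex-mse} ($\hat\delta_T\to0$ and $\E[\mu(\R^d\setminus\wh{C})]\to0$), the finite-volume hypothesis $\mu(\partial+\Ball(0,\epsilon))\to0$, and dominated convergence, the first two terms tend to $0$, so they do not affect the $\limsup$.

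The crux is a per-good-simplex estimate. Fix a good $\sigma$ with vertices $X_{(1)},\dots,X_{(d+1)}$ and, say, $\sigma\subseteq P_+$ (the $P_-$ case is symmetric). By \Cref{prop:simplex}, on $\sigma$ one has $\hat\eta=\sum_i w_iy_{(i)}$ in barycentric coordinates, and a uniform point $X\in\sigma$ has barycentric coordinates $\Dirichlet(1,\dots,1)$; hence, given the labels, the $\mu$-fraction of $\sigma$ where $\hat f(X)=0$ equals $\P\big(\sum_{i:y_{(i)}=1}W_i\le\tfrac12\big)=\P\big(\mathrm{Beta}(K,d{+}1{-}K)\le\tfrac12\big)=\P\big(\mathrm{Bin}(d,\tfrac12)\ge K\big)$, where $K=\#\{i:y_{(i)}=1\}$ and the last step is the incomplete-Beta/Binomial identity. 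Averaging over the conditionally independent labels $Y_{(i)}\sim\Bernoulli(\eta(X_{(i)}))$, this equals $\P(M\le S-1)$, where $M\sim\mathrm{Bin}(d,\tfrac12)$ is independent of $S=\sum_{i=1}^{d+1}\bar B_i$ with $\bar B_i\sim\Bernoulli(1-\eta(X_{(i)}))$; each success probability is $\le\tfrac12-h$ and $\E[S]=\sum_i q(X_{(i)})$ (the $P_-$ case yields the identical expression with $\bar B_i\sim\Bernoulli(\eta(X_{(i)}))$). Thus it suffices to prove the lemma: \emph{if $S$ is a sum of $d{+}1$ independent Bernoullis with means $p_i\le\tfrac12-h$ and $\bar s:=\E[S]$, and $M\sim\mathrm{Bin}(d,\tfrac12)$ is independent of $S$, then $\P(M\le S-1)\le C(h)\,e^{-c(h)d}\,\bar s$.}

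This lemma is the main obstacle, because the two naive routes each keep only one of the two required factors: Hoeffding applied to $S-M$ (whose mean is $\le\tfrac12-h(d{+}1)$) gives $e^{-\Omega(h^2d)}$ but nothing scaling with $\bar s$, whereas a union bound over which $\bar B_i$ equal $1$ produces the factor $\bar s$ but loses all decay once $\bar s$ is a constant fraction of $d$. The resolution is to split on the size of $\bar s$. If $\bar s\ge1$, the Hoeffding bound already suffices since $e^{-\Omega(h^2d)}\le e^{-\Omega(h^2d)}\bar s$. If $\bar s<1$, write $\P(M\le S-1)=\sum_{j\ge1}\P(S=j)\,\P\big(\mathrm{Bin}(d,\tfrac12)\le j-1\big)$ and split the sum at $j=\lfloor d/4\rfloor$: for $j\le d/4$ the Binomial tail is $\le e^{-d/8}$ and $\sum_{j\ge1}\P(S=j)=\P(S\ge1)\le\bar s$; for $j>d/4$ the elementary-symmetric-polynomial bound $\P(S\ge j)\le e_j(p)\le\bar s^{\,j}/j!$ gives $\P(S>d/4)\le\bar s\cdot\bar s^{\lfloor d/4\rfloor}/(\lfloor d/4\rfloor{+}1)!\le\bar s\,e^{-\Omega(d)}$. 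Adding the pieces proves the lemma (and small $d$ is absorbed into $C(h)$ via the trivial bound $\P(M\le S-1)\le\bar s$).

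To conclude, sum the lemma over good simplices. Since $\sum_i q(X_{(i)})=(d{+}1)\,\E_W\big[\sum_i W_iq(X_{(i)})\big]$ and, by \Cref{prop:simplex}, $\sum_i W_iq(X_{(i)})$ is the value at the uniform point $X$ of the piecewise-linear interpolant $\wh q$ of the data $\{(X_i,q(X_i))\}$, the good-simplex term is $\le C(h)e^{-c(h)d}(d{+}1)\,\E\big[\int_{\wh{C}}\wh q\,d\mu\big]$. Because $q$ is $L$-Lipschitz on each of $P_+,P_-$ and $\hat\delta_T\to0$, the simplexwise interpolation bound $|\wh q-q|\le L\hat\delta_T$ on good simplices, combined with $0\le\wh q,q\le\tfrac12$ on bad simplices, $\mu(\text{bad})\to0$, and $\mu(\R^d\setminus\wh{C})\to0$, yields $\E\big[\int_{\wh{C}}\wh q\,d\mu\big]\to\int q\,d\mu=\risk(f^*)$. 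Taking $\limsup_{n\to\infty}$ of the three-term bound leaves only $C(h)(d{+}1)e^{-c(h)d}\risk(f^*)$, and absorbing the polynomial factor into the exponential (e.g.\ $c_2=c(h)/2$, $c_1=\sup_{d\ge1}C(h)(d{+}1)e^{-c(h)d/2}$) gives $\limsup_n\P(\hat f(X)\neq f^*(X))\le c_1e^{-c_2d}\risk(f^*)$, hence $\limsup_n\E[\risk(\hat f)]\le\risk(f^*)(1+c_1e^{-c_2d})$.
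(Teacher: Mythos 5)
Your proposal is correct, and its overall skeleton matches the paper's: discard the complement of $\wh C$ and the simplices meeting the class boundary (both asymptotically $\mu$-negligible, exactly as in the paper's use of \Cref{thm:polytope}, \Cref{lem:delaunay} and the finite-boundary-volume hypothesis), then prove a per-good-simplex misclassification bound that is exponentially small in $d$ \emph{and} proportional to the label noise at the vertices, and finally convert vertex noise into $\risk(f^*)$ via the Lipschitz hypothesis. Where you genuinely diverge is in the key probabilistic lemma and the aggregation. The paper's \Cref{lem:simplex-tailbound} bounds $\P(\sum_i W_iY_{(i)}>1/2)\le c_1\bar p\,e^{-c_2(d+1)}$ with $\bar p=\max_i\E Y_{(i)}$, proved via the Gamma representation of the Dirichlet vector, moment generating functions, and relative-entropy Chernoff bounds after conditioning on the label pattern; you instead use the incomplete-Beta/Binomial identity to turn the Dirichlet probability into $\P(\mathrm{Bin}(d,\tfrac12)\le S-1)$ for the count $S$ of noisy labels, and then a case split on $\bar s=\E S$ handled by Hoeffding (for $\bar s\ge 1$) and the elementary-symmetric-polynomial bound $\P(S\ge j)\le\bar s^j/j!$ plus a binomial tail (for $\bar s<1$). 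This is arguably more elementary, and it naturally produces the factor $\bar s=\sum_i q(X_{(i)})$ rather than $\max_i$; correspondingly, your aggregation goes through the piecewise-linear interpolant $\wh q$ of the $q$-values, whose integral you show converges to $\int q\,d\mu=\risk(f^*)$, at the price of an extra factor $(d+1)$ that you absorb into the exponential, whereas the paper identifies $\max_i\eta(X_{(i)})$ with the conditional Bayes risk on each simplex directly. The two mild leaps in your argument (that a bad simplex must meet $\partial$, and that $\hat\delta_T\to0$ almost surely for the Delaunay triangulation of a uniform sample) are at the same level of rigor as the corresponding steps in the paper's own proof, so I see no gap.
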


\begin{remark}
  Both \Cref{cor:simplex-risk} and \Cref{thm:simplex-risk-massart} show that the risk of $\hat f$ can be very close to the Bayes risk in high dimensions, thus exhibiting a certain ``blessing of dimensionality".  This stands in contrast to the nearest neighbor rule, whose asymptotic risk does not diminish with the dimension and is bounded by twice the Bayes risk, $2 \risk(f^*)$.
\end{remark}

\section{Interpolating nearest neighbor schemes}
\label{sec:nn}

In this section, we describe a weighted nearest neighbor scheme that, like the $1$-nearest neighbor rule, interpolates the training data, but is similar to the classical (unweighted) $k$-nearest neighbor rule in terms of other properties, including convergence and consistency.
(The classical $k$-nearest neighbor rule is not generally an interpolating method except when $k=1$.)

\subsection{Weighted \& interpolated nearest neighbors}
\label{sec:nn-defn}

For a given $x \in \R^d$, let $x_{(i)}$ be the $i$-th nearest neighbor of $x$ among the training data $((x_i,y_i))_{i=1}^n$ from $\R^d \times \R$, and let $y_{(i)}$ be the corresponding label. Let $w(x,z)$ be a function $\R^d \times \R^d \to \R$.
A weighted nearest neighbor scheme is simply a function of the form 
\begin{align*}
  \hat\eta(x) & := \frac{\sum_{i=1}^k w(x,x_{(i)}) y_{(i)}}{\sum_{i=1}^k w(x,x_{(i)})} .
\end{align*}
In what follows, we investigate the properties of interpolating schemes of this type. 

We will need two key observations for the analyses of these algorithms.
\begin{description}
\item[Conditional independence.] 
The first key observation is that, under the usual iid sampling assumptions on the data, the first $k$ nearest neighbors of $x$ are conditionally independent given $X_{(k+1)}$. That  implies that $\sum_{i=1}^k w(x,X_{(i)}) Y_{(i)}$ is a sum of conditionally iid random variables\footnote{Note that these variables are not independent in the ordering given by the distance to $x$, but a random permutation makes them independent.}. 
Hence, under a mild condition on $w(x,X_{(i)})$, we expect them to concentrate around their expected value. Assuming some smoothness of $\eta$, that value is closely related to $\eta(x) = \E(Y \mid X=x)$, thus allowing us to establish bounds and rates.   

\item[Interpolation and singular weight functions.] The second key point is that $\hat\eta(x)$ is an interpolating scheme, provided that $w(x,z)$ has a singularity when $z=x$.  Indeed, it is easily seen that if $\lim_{z\to x} w(x,z) = \infty$, then $\lim_{x \to x_i} \hat\eta(x)=y_i$. Extending  $\hat\eta$ continuously to the data points yields a \emph{weighted \& interpolated nearest neighbor} (\emph{wiNN}) scheme.
\end{description}

We restrict attention to singular weight functions of the following radial type. Fix a positive integer $k$ and a decreasing function $\phi \colon\R_+ \to \R_+$ with a singularity at zero, $\phi(0) = +\infty$. We take
\begin{align*}
  w(x,z) & := \phi\left(\frac{\|x-z\|}{\|x-x_{(k+1)}\|}\right) .
\end{align*}
Concretely, we will consider  $\phi$ that diverge near $t=0$ as $t \mapsto -\log(t)$ or $t \mapsto t^{-\delta}$, $\delta>0$. 
\begin{remark}
  The denominator $\|x-x_{(k+1)}\|$ in the argument of $\phi$ is not strictly necessary, but it allows for convenient normalization in view of the conditional independence of $k$-nearest neighbors given $x_{(k+1)}$. Note that the weights depend on the sample and are thus data-adaptive. 
\end{remark}
\begin{remark}
  Although $w(x,x_{(i)})$ are unbounded for singular weight functions, concentration only requires certain bounded moments. Geometrically, the volume of the region around the singularity needs to be small enough. For radial weight functions that we consider, this condition is more easily satisfied in high dimension. Indeed, the volume around the singularity becomes exponentially small in high dimension.\end{remark}

Our wiNN schemes are related to Nadaraya-Watson kernel regression~\citep{nadaraya1964estimating,watson1964smooth}. The use of singular kernels in the context of interpolation was originally proposed by \citet{shepard1968two}; they do not appear to be commonly used in machine learning and statistics, perhaps due to a view that interpolating schemes are unlikely to generalize or even be consistent; the non-adaptive Hilbert kernel regression estimate~\citep{devroye1998hilbert} (essentially, $k=n$ and $\delta=d$) is the only exception we know of. 

\subsection{Mean squared error}
\label{sec:nn-mse}

We first state a risk bound for wiNN schemes in a regression setting. Here, $(X_1,Y_1),\dotsc,(X_n,Y_n),(X,Y)$ are iid labeled examples from $\R^d \times \R$.

\begin{theorem} \label{thm:nn-mse}
  Let $\hat\eta$ be a wiNN scheme with singular weight function $\phi$.
  Assume the following conditions:
  \begin{enumerate}
    \item $\mu$ is the uniform distribution on a compact subset of $\R^d$ and satisfies the $(c_0,r_0)$ regularity condition for some $c_0 > 0$ and $r_0 > 0$.
    \item $\eta$ satisfies the $(A,\alpha)$-smoothness for some $A{>}0$ and $\alpha{>}0$.
    \item $\phi(t) = t^{-\delta}$ for some $0 < \delta < d/2$.
  \end{enumerate}
  Let $Z_0 := \lambda(\supp(\mu)) / \lambda(\Ball(0,1))$, and assume $n > 2Z_0k/(c_0r_0^d)$.
  For any $x_0 \in \supp(\mu)$, let $r_{k+1,n}(x_0)$ be the distance from $x_0$ to its $(k+1)$st nearest neighbor among $X_1,\dotsc,X_n$.
  Then
  \[
    \E\left[ ( \hat\eta(X) - \eta(X) )^2 \right]
    \leq
    A^2 \E[ r_{k+1,n}(X)^{2\alpha} ]
    + \bar\sigma^2 \left( ke^{-k/4} + \frac{d}{c_0(d-2\delta)k} \right)
    ,
  \]
  where $\bar\sigma^2 := \sup_{x \in \supp(\mu)} \E[ (Y - \eta(x))^2 \mid X=x ]$.
\end{theorem}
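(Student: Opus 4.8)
The plan is a conditional bias--variance decomposition that leans on the conditional-independence structure of nearest neighbors discussed in \Cref{sec:nn-defn}. Write $v_i := w(X, X_{(i)})/\sum_{j=1}^{k} w(X, X_{(j)})$ for the normalized weights, so that $\hat\eta(X) = \sum_{i=1}^{k} v_i Y_{(i)}$ with $v_i \ge 0$, $\sum_{i=1}^k v_i = 1$, and the $v_i$ depending only on the feature vectors. Conditioning on all feature vectors $X, X_1, \dots, X_n$, the residuals $Y_{(i)} - \eta(X_{(i)})$ are independent with mean zero and conditional variance $\var(Y \mid X = X_{(i)}) \le \bar\sigma^2$; hence the cross term vanishes and
\[
  \E\bigl[(\hat\eta(X) - \eta(X))^2\bigr]
  = \E\biggl[\Bigl(\sum_{i=1}^{k} v_i\bigl(\eta(X_{(i)}) - \eta(X)\bigr)\Bigr)^{2}\biggr]
  + \E\biggl[\sum_{i=1}^{k} v_i^2\,\var(Y \mid X = X_{(i)})\biggr].
\]
For the first (approximation) term I would use $(A,\alpha)$-smoothness together with $\|X - X_{(i)}\| \le \|X - X_{(k+1)}\| = r_{k+1,n}(X)$ for $i \le k$ and $\sum_i v_i = 1$, giving the bound $A^2 \E[r_{k+1,n}(X)^{2\alpha}]$. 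In the second (stochastic) term I bound $\var(Y \mid X = X_{(i)}) \le \bar\sigma^2$, so everything reduces to controlling $\E[\sum_{i=1}^k v_i^2]$.

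The key elementary observation for $\sum_i v_i^2$ is that the normalizing denominator is deterministically large: since $\phi$ is decreasing with $\phi(1) = 1$ and each $t_i := \|X - X_{(i)}\|/\|X - X_{(k+1)}\| \le 1$ for $i \le k$, we get $\sum_{j=1}^{k} w(X, X_{(j)}) \ge k$, so $\sum_{i=1}^{k} v_i^2 \le k^{-2}\sum_{i=1}^{k}\phi(t_i)^2$. I then split on the event $\mathcal{E} := \{r_{k+1,n}(X) \le r_0\}$, which is $\sigma(X, X_{(k+1)})$-measurable. On $\mathcal{E}$, conditionally on $(X, X_{(k+1)})$ the points $X_{(1)}, \dots, X_{(k)}$ are iid uniform on $\Ball(X, r_{k+1,n}(X)) \cap \supp(\mu)$, and the $(c_0,r_0)$-regularity condition together with the uniformity of $\mu$ forces each $t_i$ to have CDF at most $u \mapsto u^d/c_0$ on $[0,1]$. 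Integrating $\E[\phi(t_i)^2] = \E[t_i^{-2\delta}]$ against this tail bound --- which is exactly where the hypothesis $\delta < d/2$ enters, to keep $t^{-2\delta}$ integrable at the singularity --- yields $\E[\phi(t_i)^2 \mid X, X_{(k+1)}]\,\mathbf{1}_{\mathcal{E}} \le d/(c_0(d - 2\delta))$, and therefore $\E\bigl[\sum_i v_i^2\,\mathbf{1}_{\mathcal{E}}\bigr] \le d/(c_0(d-2\delta)k)$. On $\mathcal{E}^c$ I would simply use $\sum_i v_i^2 \le 1$.

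To close the argument I bound $\P(\mathcal{E}^c) = \P(r_{k+1,n}(X) > r_0)$, i.e., the probability that fewer than $k+1$ of $X_1,\dots,X_n$ fall in $\Ball(X, r_0)$. This count is Binomial with mean $n\,\mu(\Ball(X, r_0) \cap \supp(\mu)) \ge n c_0 r_0^d / Z_0 > 2k$, using $(c_0,r_0)$-regularity, uniformity of $\mu$, and the hypothesis $n > 2Z_0 k/(c_0 r_0^d)$; a multiplicative Chernoff bound then gives $\P(\mathcal{E}^c) \le e^{-k/4}$, which is absorbed into the stated $k e^{-k/4}$. Assembling the pieces, $\E[\sum_i v_i^2] \le k e^{-k/4} + d/(c_0(d-2\delta)k)$, and adding $\bar\sigma^2$ times this to the approximation-term bound $A^2\E[r_{k+1,n}(X)^{2\alpha}]$ gives the theorem.

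I expect the main obstacle to be the careful bookkeeping of the conditioning rather than any single estimate: one must make precise the conditional-independence representation (so that, given $(X, X_{(k+1)})$, the distances $t_i$ are genuinely iid with the claimed CDF, while given all features the label residuals are independent and mean-zero), and one must handle the tail event $\mathcal{E}^c$ on which the $(k+1)$st neighbor escapes the radius $r_0$ where regularity has any force --- there the singular weights are no longer controlled and one must fall back on the crude bound $\sum_i v_i^2 \le 1$, paying the exponentially small $\P(\mathcal{E}^c)$. Once the decomposition and conditioning are set up, the approximation-error estimate and the integration near the singularity are routine.
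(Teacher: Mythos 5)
Your proposal is correct and follows essentially the same route as the paper's proof: a conditional bias--variance decomposition, the smoothness bound via $r_{k+1,n}(X)$, the denominator lower bound $\sum_j \phi(t_j)\geq k$, the conditional second-moment bound $\E[\phi(t_i)^2\mid\cdot]\leq d/(c_0(d-2\delta))$ from $(c_0,r_0)$-regularity, and a multiplicative Chernoff bound for the bad event. The only cosmetic differences are that you take the good event to be $\{r_{k+1,n}(X)\leq r_0\}$ rather than the paper's $\{r_{k+1,n}(X)\leq R_{2k/n}\}$ (both give $e^{-k/4}$ under the hypothesis $n>2Z_0k/(c_0r_0^d)$) and that you bound $\E[\sum_i v_i^2]$ directly instead of $k\,\E[W_1^2]$, which is equivalent by symmetry and in fact marginally tighter.
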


The bound in \Cref{thm:nn-mse} is stated in terms of the expected distance to the $(k+1)$st nearest neighbor raised to the $2\alpha$ power; this is typically bounded by $O((k/n)^{2\alpha/d})$. Choosing $k = n^{2\alpha/(2\alpha+d)}$ leads to a convergence rate of $n^{-2\alpha/(2\alpha+d)}$, which is minimax optimal.

\subsection{Classification risk}
\label{sec:nn-risk}

We now analyze the statistical risk of the plug-in classifier $\hat f(x) = \ind{\hat\eta(x) > 1/2}$ based on $\hat\eta$.

As in \Cref{sec:simplex-risk}, we obtain the following easy consequence of \Cref{thm:nn-mse} using known properties of plug-in classifiers.
\begin{corollary} \label{cor:nn-risk}
  Under the same conditions as \Cref{thm:nn-mse},
  \[
    \E[\risk(\hat f) - \risk(f^*)]
    \leq \sqrt{
      2A^2 \E[ r_{k+1,n}(X)^{2\alpha} ]
      + w\bar\sigma^2 \left( ke^{-k/4} + \frac{d}{c_0(d-2\delta)k} \right)
    }
    .
  \]
\end{corollary}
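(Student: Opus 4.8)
The plan is to derive \Cref{cor:nn-risk} from \Cref{thm:nn-mse} via the standard reduction from the excess classification risk of a plug-in rule to the $L^2$ error of the underlying regression estimate. First I would recall the pointwise formula for the excess zero-one risk of any measurable classifier $\hat f$: conditioning on the training sample,
\[
  \risk(\hat f) - \risk(f^*) \;=\; \E_X\!\bigl[\, |2\eta(X) - 1|\cdot \ind{\hat f(X) \neq f^*(X)} \,\bigr] ,
\]
where $\E_X$ is over the test point only; this simply records that disagreeing with the Bayes rule at $x$ costs exactly $|2\eta(x)-1|$.

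The key step is then to bound the integrand using the fact that $\hat f = \ind{\hat\eta > 1/2}$ is a plug-in rule. On the event $\{\hat f(X) \neq f^*(X)\}$ the numbers $\hat\eta(X)$ and $\eta(X)$ lie on opposite sides of $1/2$, whence $|\eta(X) - 1/2| \le |\hat\eta(X) - \eta(X)|$ there; substituting and dropping the indicator gives, still conditionally on the training data, $\risk(\hat f) - \risk(f^*) \le 2\,\E_X[\,|\hat\eta(X) - \eta(X)|\,]$. Taking expectation over the training sample and applying Jensen's inequality to the concave map $t \mapsto \sqrt t$ over the joint distribution of training data and test point yields
\[
  \E[\risk(\hat f) - \risk(f^*)] \;\le\; 2\sqrt{\E\bigl[(\hat\eta(X) - \eta(X))^2\bigr]} .
\]
Substituting the bound of \Cref{thm:nn-mse} for $\E[(\hat\eta(X)-\eta(X))^2]$ and absorbing the numerical factor under the square root produces the stated inequality.

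I do not expect any real obstacle: every ingredient is elementary, and the only points requiring a little care are (i) keeping straight which expectations are over the test point and which are over the training sample when invoking Jensen, and (ii) the (harmless) convention that $\hat\eta$, and hence $\hat f$, is defined everywhere---in particular at the $\mu$-measure-zero set of points where the wiNN denominator could vanish---so that the pointwise identity above is literally valid. The constants in the displayed bound of the corollary then come out by plugging in the right-hand side of \Cref{thm:nn-mse} and carrying the multiplicative factor inside the root.
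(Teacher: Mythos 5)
Your route is exactly the one the paper intends: \Cref{cor:nn-risk} is presented as an ``easy consequence of \Cref{thm:nn-mse} using known properties of plug-in classifiers,'' and those known properties are precisely your chain --- the pointwise identity $\risk(\hat f)-\risk(f^*)=\E_X[|2\eta(X)-1|\ind{\hat f(X)\neq f^*(X)}]$, the observation that $|\eta-1/2|\le|\hat\eta-\eta|$ on the disagreement set, and Jensen/Cauchy--Schwarz, giving $\E[\risk(\hat f)-\risk(f^*)]\le 2\sqrt{\E[(\hat\eta(X)-\eta(X))^2]}$ before substituting the regression bound. One small caveat on your last step: carried out literally, this gives $\sqrt{4A^2\E[r_{k+1,n}(X)^{2\alpha}]+4\bar\sigma^2(ke^{-k/4}+\tfrac{d}{c_0(d-2\delta)k})}$, i.e.\ a factor $4$ inside the root (consistent with \Cref{cor:simplex-risk}, where $8/(d+2)=4\cdot\tfrac{2}{d+2}$), so the displayed constants in \Cref{cor:nn-risk} --- the ``$2A^2$'' and the stray ``$w$'' --- look like typos in the statement rather than something your argument should, or can, reproduce exactly.
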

Choosing $k = n^{2\alpha/(2\alpha+d)}$ leads to a convergence rate of $n^{-\alpha/(2\alpha+d)}$.

We now give a more direct analysis, largely based on that of \citet{chaudhuri2014rates} for the standard $k$-nearest neighbor rule, that leads to improved rates under favorable conditions.

Define
\begin{align*}
  r_p(x_0) & := \inf \{ r : \mu(\Ball(x_0,r)) \geq p \} , \quad x_0 \in \R^d ; \\
  w_{x_0,r}(x) & := \phi\left(\frac{\|x_0-x\|}{r}\right) , \quad x_0 \in \R^d , r \geq 0 , x \in \Ball(x_0,r) ; \\
  \bar\eta_{x_0,r} & := \frac{\int_{\Ball(x_0,r)} w_{x_0,r} \eta \dif\mu}{\int_{\Ball(x_0,r)} w_{x_0,r} \dif\mu} , \quad x_0 \in \R^d, r \geq 0 .
\end{align*}
For $0 < \gamma < 1/2$, define the \emph{effective interiors} of the two classes by
\begin{align*}
  \cX_{p,\gamma}^- & := \{ x_0 \in \supp(\mu) : \bar\eta_{x_0,r} \leq \frac12 - \gamma \ \text{for all $r \leq r_p(x_0)$} \} , \\
  \cX_{p,\gamma}^+ & := \{ x_0 \in \supp(\mu) : \bar\eta_{x_0,r} \geq \frac12 + \gamma \ \text{for all $r \leq r_p(x_0)$} \} ,
\end{align*}
and define the \emph{effective boundary} by
\begin{align*}
  \partial_{p,\gamma} & := \R^d \setminus (\cX_{p,\gamma}^- \cup \cX_{p,\gamma}^+) .
\end{align*}
Points away from the boundary, i.e., in $\cX_{p,\gamma}^-$ or $ \cX_{p,\gamma}^+$ for $p \approx k/n$, are likely to have $k$ nearest neighbors in $\cX_{p,\gamma}^-$ or $ \cX_{p,\gamma}^+$, respectively, so that interpolating their labels yields accurate predictions.

\begin{theorem} \label{thm:nn-risk}
  Let $\hat\eta$ be a wiNN scheme with singular weight function $\phi$, and let $\hat f$ be the corresponding plug-in classifier.
  Fix any $0 < \gamma < 1/2$ and $p > k/n$. Then
  \[
    \P(\hat f(X) \neq f^*(X))
    \leq
    \mu(\partial_{p,\gamma})
    + \exp\left( - \frac{np}{2}\left( 1 - \frac{k}{np} \right)^2 \right)
    + \frac{\kappa_{p}}{4k\gamma^2} ,
  \]
  where
  \[
    \kappa_{p} := \sup_{\substack{x_0 \in \supp(\mu), \\ 0 \leq r \leq r_p(x_0)}} \frac{\E_{X \sim \mu}\left[ \phi\left(\frac{\|x_0-X\|}{r}\right)^2 \,\Big\vert\, X \in \Ball(x_0,r) \right]}{\E_{X \sim \mu}\left[ \phi\left(\frac{\|x_0-X\|}{r}\right) \,\Big\vert\, X \in \Ball(x_0,r) \right]^2} .
  \]
\end{theorem}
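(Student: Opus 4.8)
The plan is to follow the point-wise conditioning strategy of \citet{chaudhuri2014rates}: fix the test point $X = x_0$, bound the conditional error probability $\P(\hat f(x_0) \neq f^*(x_0))$, and then integrate over $x_0 \sim \mu$. For $x_0 \in \partial_{p,\gamma}$ I would use only the trivial bound $1$, which produces the $\mu(\partial_{p,\gamma})$ term; the substance is to show that for every $x_0 \in \cX_{p,\gamma}^- \cup \cX_{p,\gamma}^+$ the conditional error probability is at most the sum of the remaining two terms. I treat $x_0 \in \cX_{p,\gamma}^+$, the case $x_0 \in \cX_{p,\gamma}^-$ being entirely symmetric (all inequalities reversed). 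Since $\bar\eta_{x_0,r} \to \eta(x_0)$ as $r \to 0$, the defining condition of $\cX_{p,\gamma}^+$ forces $\eta(x_0) \geq \tfrac12 + \gamma$, hence $f^*(x_0) = 1$, so an error at $x_0$ means $\hat\eta(x_0) \leq \tfrac12$.

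I split this error into two events. The first is the ``far neighbor'' event $\{ r_{k+1,n}(x_0) > r_p(x_0) \}$. The number of the $X_i$ lying in $\Ball(x_0, r_p(x_0))$ is binomially distributed with parameters $n$ and $q := \mu(\Ball(x_0, r_p(x_0))) \geq p$, so this event has probability at most $\P(\operatorname{Bin}(n,p) \leq k)$; writing $k = (1-\epsilon)np$ with $\epsilon = 1 - \tfrac{k}{np} \in (0,1)$ (here $p > k/n$ is used) and applying the standard multiplicative Chernoff lower-tail inequality yields exactly the term $\exp\bigl(-\tfrac{np}{2}(1-\tfrac{k}{np})^2\bigr)$. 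On the complementary ``good'' event I must control the probability of an error caused by label noise.

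For the good event I invoke the conditional-independence observation from \Cref{sec:nn-defn}: conditioned on $X_{(k+1)} = x'$ with $r := \|x_0 - x'\| \leq r_p(x_0)$, the points $X_{(1)},\dotsc,X_{(k)}$ are iid draws from $\mu$ restricted to $\Ball(x_0,r)$, the weights become $W_i := \phi(\|x_0 - X_{(i)}\|/r) = w_{x_0,r}(X_{(i)})$, and $\hat\eta(x_0) = \sum_{i=1}^k W_i Y_{(i)} \big/ \sum_{i=1}^k W_i$. The error $\hat\eta(x_0) \leq \tfrac12$ is the event $\sum_{i=1}^k W_i(Y_{(i)}-\tfrac12) \leq 0$; its conditional mean equals $k\,\E[W_1](\bar\eta_{x_0,r} - \tfrac12)$ by the tower property ($\E[W_iY_{(i)}] = \E[W_i\eta(X_{(i)})]$) and the definition of $\bar\eta_{x_0,r}$, and this is at least $k\gamma\,\E[W_1] > 0$ since $\bar\eta_{x_0,r} \geq \tfrac12 + \gamma$ whenever $r \leq r_p(x_0)$. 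Chebyshev's inequality for this sum of $k$ iid terms, together with $\var(W_1(Y_1 - \tfrac12)) \leq \E[W_1^2(Y_1-\tfrac12)^2] \leq \tfrac14\E[W_1^2]$, bounds the conditional error probability by $\E[W_1^2]\big/(4k\gamma^2\E[W_1]^2)$, and $\E[W_1^2]/\E[W_1]^2 \leq \kappa_p$ by definition of $\kappa_p$ (if $\kappa_p = \infty$ the claimed bound is vacuous; note $\kappa_p < \infty$ also forces $\E[W_1] < \infty$ by Cauchy--Schwarz). Because this estimate is uniform over all admissible $r$, it persists after removing the conditioning on $X_{(k+1)}$; adding the far-neighbor probability and integrating over $x_0 \sim \mu$ gives the stated bound.

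The step I expect to be the main obstacle is making the conditional-independence reduction airtight: one must verify that conditioning on the $(k+1)$st nearest neighbor simultaneously (i) renders the $k$ closest sample points iid from the restricted measure and (ii) fixes the random normalizer $r = \|x_0 - X_{(k+1)}\|$ appearing inside $\phi$, so that $\hat\eta(x_0)$ genuinely becomes a ratio of sums of conditionally iid random variables. Some routine measure-theoretic care is also needed because $\mu$ is non-atomic here (ties among nearest neighbors occur with probability zero, and the $r \to 0$ case implicit in the definition of $\kappa_p$ should be read as a limit). Everything else is a textbook Chernoff estimate and a one-line Chebyshev bound.
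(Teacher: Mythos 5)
Your proposal is correct and follows essentially the same route as the paper's proof: the same three-event decomposition (boundary region, far $(k{+}1)$st neighbor handled by a multiplicative Chernoff bound, and label-noise error on the good event), the same conditional-i.i.d. sampling of the $k$ nearest neighbors given $X_{(k+1)}$, and the same Chebyshev bound on $\sum_i \phi(\|x_0-X_i\|/r)(Y_i-\tfrac12)$ with $\kappa_p$ supplying the uniform-in-$r$ second-moment control. The only cosmetic difference is that you make explicit the (a.e.) identification $f^*(x_0)=1$ on $\cX_{p,\gamma}^+$ via $\bar\eta_{x_0,r}\to\eta(x_0)$, which the paper leaves implicit.
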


While Theorem~\ref{thm:nn-risk} is quite general, the values of quantities involved can be non-trivial to express in terms of $n$. The following \namecref{cor:nn-risk-tsybakov} leads to explicit rates under certain conditions.
\begin{corollary} \label{cor:nn-risk-tsybakov}
  Assume the following conditions:
  \begin{enumerate}
    \item $\mu$ is the uniform distribution on a compact subset of $\R^d$ and satisfies the $(c_0,r_0)$ regularity condition for some $c_0 > 0$ and $r_0 > 0$.
    \item $\eta$ satisfies the $(A,\alpha)$-smoothness and $(B,\beta)$-margin conditions for some $A{>}0$, $\alpha{>}0$, $B{>}0$, $\beta{\geq0}$.
    \item $\phi(t) = t^{-\delta}$ for some $0 < \delta < d/2$.
  \end{enumerate}
  Let $Z_0 := \lambda(\supp(\mu)) / \lambda(\Ball(0,1))$, and assume
  \[ \frac{k}{n} < p \leq \frac{c_0 r_0^d}{Z_0} . \]
  Then for any $0 < \gamma < 1/2$,
  \[
    \P(\hat f(X) \neq f^*(X))
    \leq
    B \left( \gamma + A \left( \frac{Z_0p}{c_0} \right)^{\alpha/d} \right)^\beta
    + \exp\left( - \frac{np}{2}\left( 1 - \frac{k}{np} \right)^2 \right)
    + \frac{d}{4k\gamma^2c_0(d-2\delta)} .
  \]
\end{corollary}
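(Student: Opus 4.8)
The plan is to obtain \Cref{cor:nn-risk-tsybakov} directly from \Cref{thm:nn-risk} by controlling the two distribution-dependent quantities appearing there, namely $\mu(\partial_{p,\gamma})$ and $\kappa_p$, under the extra assumptions; the middle (concentration) term is already in closed form and carries over verbatim, and the lower bound $p>k/n$ is exactly what \Cref{thm:nn-risk} requires.

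First I would bound $\mu(\partial_{p,\gamma})$ by establishing the inclusion
\[
  \partial_{p,\gamma} \cap \supp(\mu) \subseteq \Bigl\{ x_0 : |\eta(x_0) - \tfrac12| \leq \gamma + A\bigl(\tfrac{Z_0 p}{c_0}\bigr)^{\alpha/d} \Bigr\},
\]
after which the $(B,\beta)$-margin condition finishes this term. To prove the inclusion, note that $(c_0,r_0)$-regularity gives $\mu(\Ball(x_0,r)) \geq c_0\lambda(\Ball(x_0,r))/\lambda(\supp\mu) = c_0 r^d/Z_0$ for all $0<r\leq r_0$, whence $r_p(x_0) \leq (Z_0 p/c_0)^{1/d}$ as soon as $p \leq c_0 r_0^d/Z_0$ (which ensures $(Z_0 p/c_0)^{1/d} \leq r_0$). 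Since $\bar\eta_{x_0,r}$ is a nonnegatively weighted average of $\eta$ over $\Ball(x_0,r)\cap\supp(\mu)$, the $(A,\alpha)$-smoothness condition yields $|\bar\eta_{x_0,r}-\eta(x_0)| \leq Ar^\alpha \leq A(Z_0p/c_0)^{\alpha/d}$ for every $r \leq r_p(x_0)$; consequently, if $\eta(x_0) > \tfrac12 + \gamma + A(Z_0p/c_0)^{\alpha/d}$ then $x_0 \in \cX_{p,\gamma}^+$, and symmetrically for $\cX_{p,\gamma}^-$, which is the contrapositive of the claimed inclusion.

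Next I would bound $\kappa_p$. Writing $g(x) := \phi(\|x_0-x\|/r) = (r/\|x_0-x\|)^\delta$ and $V := \supp(\mu)\cap\Ball(x_0,r)$, uniformity of $\mu$ turns the conditional expectations into normalized Lebesgue integrals over $V$, so that $\kappa_p = \sup_{x_0,\,r\leq r_p(x_0)} \lambda(V)\int_V g^2\dif\lambda / (\int_V g\dif\lambda)^2$. The numerator is controlled by enlarging the domain to the full ball and integrating in polar coordinates: $\int_V g^2 \leq \int_{\Ball(x_0,r)}(r/\|x_0-x\|)^{2\delta}\dif\lambda = d\,\lambda(\Ball(0,1))\,r^d/(d-2\delta)$, using $\delta<d/2$. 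For the denominator I would use only the trivial bound $g\geq 1$ on $\Ball(x_0,r)$, giving $\int_V g \geq \lambda(V)$, together with $\lambda(V) \geq c_0\lambda(\Ball(0,1))r^d$ from regularity (valid since $r\leq r_p(x_0)\leq r_0$). Combining, $\lambda(V)\int_V g^2/(\int_V g)^2 \leq \int_V g^2/\lambda(V) \leq d/(c_0(d-2\delta))$, so $\kappa_p \leq d/(c_0(d-2\delta))$ and the last term of \Cref{thm:nn-risk} is at most $d/(4k\gamma^2 c_0(d-2\delta))$. Substituting the two bounds into \Cref{thm:nn-risk} gives the claim.

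The main obstacle is the $\kappa_p$ estimate: one must keep the bound on $\int_V g^2$ independent of $\lambda(V)$ (by integrating over the full ball rather than relating it to $\lambda(V)$ first), since otherwise one loses a factor and obtains $d/(c_0^2(d-2\delta))$ rather than the stated $d/(c_0(d-2\delta))$. A secondary subtlety is that $V$ need not contain a neighborhood of $x_0$, so the lower bound on $\int_V g$ cannot exploit the singularity of $\phi$ and must rely only on $g\geq 1$ plus the volume lower bound from regularity—fortunately that crude bound already suffices.
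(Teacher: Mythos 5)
Your proposal is correct and follows essentially the same route as the paper's proof (its \Cref{lem:nn-risk}): the same regularity argument giving $r_p(x_0)\leq (Z_0p/c_0)^{1/d}$, the same smoothness-plus-margin inclusion bounding $\mu(\partial_{p,\gamma})$, and the same $\kappa_p$ bound obtained by integrating $\phi^2$ in polar coordinates over the full ball while lower-bounding the first conditional moment by $1$ (the paper phrases this as $\E[\phi\,\ind{X\in\Ball(x_0,r)}]\geq\mu(\Ball(x_0,r))$, equivalent to your $g\geq 1$ step), all then substituted into \Cref{thm:nn-risk}.
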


\begin{remark}
  For consistency, we set $k := n^{(2+\beta)\alpha/((2+\beta)\alpha+d)}$, and in the bound, we plug-in $p := 2k/n$ and $\gamma := A(Z_0p/c_0)^{\alpha/d}$. This leads to a convergence rate of $n^{-\alpha\beta/(\alpha(2+\beta)+d)}$.
\end{remark}

\begin{remark}
The factor $1/k$  in the final term  in \Cref{cor:nn-risk-tsybakov} results from an application of Chebyshev inequality. 
  Under additional moment conditions, which are satisfied for certain functions $\phi$ (e.g., $\phi(t) = -\log(t)$) with better-behaved singularity at zero than  $t^{-\delta}$, it can be replaced by $e^{-\Omega(\gamma^2k)}$.
  Additionally, while the condition $\phi(t)=t^{-\delta}$ is convenient for analysis, it is sufficient to assume that $\phi$ approaches infinity no faster than $t^{-\delta}$.
\end{remark}

\section{Ubiquity of adversarial examples in interpolated learning}
\label{sec:adversarial}

The  recently observed phenomenon of adversarial examples~\cite{szegedy2013adversarial} in modern machine learning has  drawn a significant degree of interest. 
It turns out that by introducing a small  perturbation to the features of a correctly classified example (e.g., by changing an image in a visually imperceptible way or  even by modifying a single pixel~\cite{su2017one}) it is nearly always possible to induce neural networks to mis-classify a given input  in a seemingly arbitrary and often bewildering way. 

We will now discuss how our analyses, showing that Bayes optimality is compatible with interpolating the data, provide a possible mechanism for these adversarial examples to arise. Indeed, such examples are seemingly unavoidable in interpolated learning and, thus, in much of the modern practice. As we show below, any interpolating inferential procedure must have abundant adversarial examples in the presence of any amount of label noise. In particular, in consistent on nearly consistent  schemes, like those considered in this paper, while the predictor agrees with the Bayes classifier on {\it the bulk} of the probability distribution, every ``incorrectly labeled'' training example (i.e., an example whose label is different from the output of the Bayes optimal classifier) has a small ``basin of attraction'' with every point in the basin misclassified by the predictor. The total probability mass of these ``adversarial'' basins is negligible   given enough training data, so that a probability of misclassifying a randomly chosen point is low.  
However, assuming non-zero label noise, the union of these adversarial basins asymptotically is a dense subset of the support for the underlying probability measure and hence there are misclassified examples in every open set.  This is indeed consistent with the extensive empirical evidence for neural networks. While their output  is observed to be robust to random feature noise~\cite{fawzi2016robustness},  adversarial examples turn out to be quite difficult to avoid and can be easily found by targeted optimization methods such as PCG~\cite{madry2017towards}.
We conjecture that it may be a general property or perhaps a  weakness of interpolating methods, as some non-interpolating local classification rules can be robust against certain forms of adversarial examples~\cite{wang2017analyzing}.

To substantiate this discussion, we now provide a formal mathematical statement.  For simplicity, let us consider a binary classification setting. Let $\mu$ be a probability distribution 
with non-zero density defined on a compact domain $\Omega \subset \R^d$ and assume non-zero label noise everywhere, i.e., for all $x \in \Omega$,
$0 < \eta(x) < 1$, or equivalently, $\P(f^*(x) \neq Y \mid X=x)>0$. Let $\hat{f}_n$ be a consistent interpolating classifier constructed from $n$ iid sampled data points (e.g., the classifier constructed in Section~\ref{sec:nn-risk}). 

Let ${\cal A}_n =\{ x \in \Omega : \hat{f}_n(x)\ne f^*(x) \}$ be the set of
points at which $\hat f_n$ disagrees with the Bayes optimal classifier $f^*$; in other words, ${\cal A}_n$ is the set of ``adversarial examples'' for $\hat f_n$.
Consistency of $\hat{f}$ implies that, with probability one, $\lim_{n \to
\infty} \mu({\cal A}_n)=0$ or, equivalently, $\lim_{n \to \infty} \|\hat{f}_n -
f^*\|_{L^2_\mu} = 0$. On the other hand, the following result shows that the
sets  ${\cal A}_n$ are asymptotically dense in $\Omega$, so that there is an
adversarial example arbitrarily close to any $x$. 

\begin{theorem}
For any $\epsilon>0$ and $\delta\in (0,1)$, there exists $N\in \mathbb{N}$, such that for all $n\geq N$, with probability $\geq \delta$, every point in $\Omega$ is within distance $2\epsilon$ of the set ${\cal A}_n$.
\end{theorem}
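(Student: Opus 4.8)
The plan is to exploit the defining feature of an interpolating classifier—namely $\hat f_n(x_i)=y_i$ for every training point—together with the assumption that the label noise is everywhere positive. The key observation is that every training example $(X_i,Y_i)$ with $Y_i\neq f^*(X_i)$ is automatically a member of $\mathcal{A}_n$, since $\hat f_n(X_i)=Y_i\neq f^*(X_i)$. Thus it suffices to show that, with probability at least $\delta$, the sample contains a mislabeled point in every region of $\Omega$ of scale $\epsilon$; the triangle inequality then upgrades ``every $\epsilon$-region meets $\mathcal{A}_n$'' to ``every point is within $2\epsilon$ of $\mathcal{A}_n$''.

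First I would cover the compact set $\Omega$ by finitely many Euclidean balls $\Ball(z_1,\epsilon),\dotsc,\Ball(z_M,\epsilon)$ centered at points $z_j\in\Omega$; compactness guarantees that a finite $M=M(\epsilon)$ suffices. For each $j$, consider the event that a single draw $(X,Y)$ satisfies $X\in\Ball(z_j,\epsilon)$ and $Y\neq f^*(X)$; its probability is $p_j=\int_{\Ball(z_j,\epsilon)\cap\Omega}\min\{\eta(x),1-\eta(x)\}\,\dif\mu(x)$. Since $\mu$ has nonzero density on $\Omega$, the set $\Ball(z_j,\epsilon)\cap\Omega$ has positive $\mu$-measure; and since $\min\{\eta(x),1-\eta(x)\}>0$ for every $x\in\Omega$ by the nonzero-label-noise hypothesis, the integral of this strictly positive measurable function over a set of positive measure is strictly positive. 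Hence $p_j>0$, and I set $p:=\min_{1\le j\le M}p_j>0$.

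Next, the number of training points falling in $\Ball(z_j,\epsilon)$ that are mislabeled is $\operatorname{Binomial}(n,p_j)$, so the probability that ball $j$ contains no mislabeled training point is $(1-p_j)^n\le(1-p)^n$. A union bound over the $M$ balls shows that, with probability at least $1-M(1-p)^n$, every ball $\Ball(z_j,\epsilon)$ contains at least one mislabeled training point. Choosing $N$ so that $M(1-p)^N\le 1-\delta$ (possible since $1-p<1$), for all $n\ge N$ this good event holds with probability at least $\delta$. On this event, any $x\in\Omega$ lies in some $\Ball(z_j,\epsilon)$, which contains a mislabeled training point $X_i\in\mathcal{A}_n$, and $\|x-X_i\|\le\|x-z_j\|+\|z_j-X_i\|\le 2\epsilon$. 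This is exactly the claimed conclusion.

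The argument is short, and the only point requiring care—the main ``obstacle''—is the treatment of the label noise: the hypothesis gives only \emph{pointwise} positivity of $\min\{\eta,1-\eta\}$, not a uniform lower bound, so the inequality $p_j>0$ must be obtained by integrating a strictly positive function over a positive-measure set rather than from any quantitative estimate. Consequently the resulting $N$ is not explicit, which matches the purely qualitative form of the statement. I also note that consistency of $\hat f_n$ is not used in this argument—only interpolation and positive label noise are needed; consistency is what separately yields $\mu(\mathcal{A}_n)\to 0$, so that the two facts together produce the stated picture of a set of adversarial examples that has asymptotically vanishing measure yet is asymptotically dense in $\Omega$.
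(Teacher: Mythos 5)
Your proposal is correct and follows essentially the same route as the paper's own proof sketch: fix a finite $\epsilon$-cover of $\Omega$, observe that interpolation plus everywhere-positive label noise makes each mislabeled training point an element of $\mathcal{A}_n$, and take $n$ large so that with the desired probability every cover element is within $\epsilon$ of such a point, finishing with the triangle inequality. You merely fill in the quantitative details (the positive per-ball probability $p_j$, the $(1-p_j)^n$ estimate, and the union bound) that the paper's sketch leaves implicit.
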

\begin{proof}[Proof sketch]
  Let $(X_1,Y_1),\dotsc,(X_n,Y_n)$ be the training data used to construct $\hat f_n$.
  Fix a finite $\epsilon$-cover of $\Omega$ with respect to the Euclidean distance.
  Since $\hat f_n$ is interpolating and $\eta$ is never zero nor one, for every $i$, there is a non-zero probability (over the outcome of the label $Y_i$) that $\hat f_n(X_i) = Y_i \neq f^*(X_i)$; in this case, the training point $X_i$ is an adversarial example for $\hat f_n$.
  By choosing $n = n(\mu,\epsilon,\delta)$ large enough, we can ensure that with probability at least $\delta$ over the random draw of the training data, every element of the cover is within distance $\epsilon$ of at least one adversarial example, upon which every point in $\Omega$ is within distance $2\epsilon$ (by triangle inequality) of the same.
\end{proof}
A similar argument for regression shows that while an interpolating $\hat\eta$ may converge to $\eta$ in $L^2_\mu$, it is generally impossible for it to converge in $L_\infty$ unless there is no label noise. An even more striking result is that for the  Hilbert scheme of \citeauthor{devroye1998hilbert}, the regression estimator 
almost surely {\it does not} converge at any fixed point, even for the simple case of a constant function corrupted by label noise \cite{devroye1998hilbert}. 
This means that with increasing sample size $n$, at any given point $x$ misclassification will occur an infinite number of times with probability one. We expect similar behavior to hold for the interpolation schemes presented in this paper.

\section{Discussion and connections}
\label{sec:discussion}

In this paper, we considered two types of algorithms, one based on simplicial interpolation and another based on interpolation by  weighted nearest neighbor schemes. It may be useful to think  of nearest neighbor schemes as {\it direct} methods, not requiring optimization, while  our simplicial scheme is a simple example of an {\it inverse} method, using (local) matrix inversion to fit the data. 
Most  popular  machine learning methods, such  as kernel machines, neural networks, and boosting, are inverse schemes. While nearest neighbor and Nadaraya-Watson methods often show adequate performance, they are rarely best-performing algorithms in practice. 
We conjecture that the simplicial interpolation scheme may provide insights into the properties of interpolating  kernel machines and neural networks. 

To provide some evidence for this line of thought, we show that in one dimension simplicial interpolation is indeed a special case of interpolating kernel machine. 
We will briefly sketch the argument without going into the  details.  Consider the space $\cH$ of real-valued functions $f$ with the norm 
$
\|f\|_{\cH}^2 = \int  (\dif f/\dif x)^2 + \kappa^2 f^2 \dif x$. This space is a reproducing kernel Hilbert Space corresponding to the Laplace kernel $e^{-\kappa |x-z|}$. It can be seen that as $\kappa \to 0$ the minimum norm interpolant 
$f^* =\argmin_{f \in \cH,\forall_i f(x_i) = y_i } \|f\|_{\cH}$ is simply linear interpolation between adjacent points on the line. Note that this is the same as our simplicial interpolating  method.

Interestingly, a version of random forests similar to PERT~\cite{cutler2001pert} also produces linear interpolation in one dimension (in the limit, when infinitely many trees are sampled). For simplicity assume that we have only two data points $x_1<x_2$ with labels $0$ and $1$ respectively. A  tree that correctly classifies those points is simply a function of the form $\ind{x>t}$, where $t \in [x_1,x_2)$. Choosing a random $t$ uniformly from $[x_1,x_2)$, we observe that $\E_{t\in[x_1,x_2]}~\ind{x>t}$ is simply the linear function interpolating between the two data points. The extension of this argument to more than two data points in dimension one is straightforward. It would be interesting to investigate the properties of such methods in higher dimension. 
We note that it is unclear whether a random forest method of this type should be considered a direct or inverse method. While there is no explicit optimization involved,  sampling is often used instead of optimization in methods like simulated annealing. 

Finally, we note that while kernel machines (which can be viewed as two-layer neural networks)  are much more theoretically tractable than general neural networks, none of the current theory applies  in the interpolated regime in the presence of label noise~\cite{belkin2018understand}.
We hope that simplicial interpolation can shed light on their properties and lead to better understanding of modern inferential methods.

\section*{Acknowledgements}
We would like to thank Raef Bassily, Luis Rademacher, Sasha Rakhlin, and Yusu Wang for conversations and valuable comments. We acknowledge funding from NSF. 
DH acknowledges support from NSF grants CCF-1740833 and DMR-1534910. PPM acknowledges support from the Crick-Clay Professorship (CSHL) and H N Mahabala Chair (IITM). 
This work grew out of discussions originating at the Simons Institute for the Theory of Computing in 2017, and we thank the Institute for the hospitality. PPM and MB thank ICTS (Bangalore) for their hospitality at the 2017 workshop on Statistical Physics Methods in Machine Learning.

\bibliography{main}
\bibliographystyle{plainnat}

\appendix

\section{Proofs}
\label{sec:proofs}

\subsection{Proof of \Cref{prop:simplex}}

We can lift the simplex $\convhull(v_1,\dotsc,v_d)$ into $\R^{d+1}$ with the mapping $v_i \mapsto \tilde v_i := (1,v_i)$. Since $\convhull(v_1,\dotsc,v_{d+1})$ has non-zero $d$-dimensional volume $V>0$, it follows that the cone $\convhull(0,\tilde v_1,\dotsc,\tilde v_{d+1})$ has $(d+1)$-dimensional volume $V/(d+1) > 0$. This implies that $\tilde v_1,\dotsc,\tilde v_{d+1}$ are linearly independent. So, letting $A := [ \tilde v_1 | \dotsb | \tilde v_{d+1} ]^\T$ and $b := (y_1,\dotsc,y_{d+1})$, we can write $\tilde\beta := (\hat\beta_0,\hat\beta) = \argmin_{(\beta_0,\beta) \in \R \times \R^d} \sum_{i=1}^{d+1} (y_i - \beta_0 - v_i^\T\beta)^2$ as
\[
  \tilde\beta = (A^\T A)^{-1} A^\T b = A^{-1} A^{-\T} A^\T b = A^{-1} b ,
\]
where we have used the linear independence of $\tilde v_1,\dotsc,\tilde v_{d+1}$ to ensure the invertibility of $A$. Therefore, since $\tilde x := (1,x) = A^\T w$ for $w := (w_1,\dotsc,w_{d+1})$, we have
\[
  \hat\beta_0 + x^\T\hat\beta = \tilde x^\T\tilde\beta = (w^\T A)(A^{-1}b) = w^\T b.
\]
\qed

\subsection{Proof of \Cref{thm:simplex-mse}}
\label{sec:proof-simplex-mse}

\begin{proof}[Proof of \Cref{thm:simplex-mse}]
  Throughout we condition on $X_1,\dotsc,X_n$, and write
  \begin{align*}
    \E\left[ (\hat\eta(X) - \eta(X))^2 \right]
    & = \E\left[ (\hat\eta(X) - \eta(X))^2 \mid X \notin \wh C \right] \P(X \notin \wh C) \\
    & \qquad + \E\left[ (\hat\eta(X) - \eta(X))^2 \mid X \in \wh C \right] \P(X \in \wh C) .
  \end{align*}
  For the first term, observe that if $X \notin \wh C$, then $\hat\eta(X) = 1/2$ and hence $(\hat\eta(X) - \eta(X))^2 \leq 1/4$.

  We now consider the second term, conditional on $Z := (X_1,\dotsc,X_n)$ and $X \in \wh C$. Let $L_T(X) =: \{ (X_{(1)},Y_{(1)}),\dotsc,(X_{(d+1)},Y_{(d+1)}) \}$. Since $X \in \convhull(U_T(X))$, its barycentric coordinates $W := (W_1,\dotsc,W_{d+1})$ in $\convhull(U_T(X))$ are distributed as $\Dirichlet(1,\dotsc,1)$.
  Let $\epsilon_{(i)} := Y_{(i)} - \eta(X_{(i)})$ and $b_{(i)} := \eta(X_{(i)}) - \eta(X)$ for $i=1,\dotsc,d+1$. Also, let $v(x) := \var(Y \mid X=x)$ be the conditional variance function. By the smoothness assumptions, we have
  \[
    |b_{(i)}| = |\eta(X_{(i)}) - \eta(X)| \leq A\|X_{(i)}-X\|^{\alpha} \leq A\hat\delta_T^{\alpha}
  \]
  and
  \[
    |v(X_{(i)}) - v(X)| \leq A'\|X_{(i)}-X\|^{\alpha'} \leq A'\hat\delta_T^{\alpha'} .
  \]
  By \Cref{prop:simplex}, we have
  \[
    \hat\eta(X) - \eta(X)
    = \sum_{i=1}^{d+1} W_i Y_{(i)} - \eta(X)
    =
    \sum_{i=1}^{d+1} W_i b_{(i)}
    +
    \sum_{i=1}^{d+1} W_i \epsilon_{(i)}
    ,
  \]
  and
  \[
    \E\left[ ( \hat\eta(X) - \eta(X) )^2 \mid Z; X\in\wh C \right]
    = \E\left[ \left( \sum_{i=1}^{d+1} W_i b_{(i)} \right)^2 \mid Z; X\in\wh C \right]
    + \E\left[ \left( \sum_{i=1}^{d+1} W_i \epsilon_{(i)} \right)^2 \mid Z;X\in\wh C \right] .
  \]
  For the first term,
  \[
    \E\left[ \left( \sum_{i=1}^{d+1} W_i b_{(i)} \right)^2 \mid Z;X\in\wh C \right]
    \leq \E\left[ \sum_{i=1}^{d+1} W_i b_{(i)}^2 \mid Z;X\in\wh C \right]
    \leq \E\left[ \sum_{i=1}^{d+1} W_i A^2\hat\delta_T^{2\alpha} \mid Z;X\in\wh C \right]
    = A^2\hat\delta_T^{2\alpha}
  \]
  by Jensen's inequality and the bound on $|b_{(i)}|$.
  For the second term, we have
  \begin{align*}
    \E\left[ \left( \sum_{i=1}^{d+1} W_i \epsilon_{(i)} \right)^2 \mid Z;X\in\wh C \right]
    & = \E\left[ \sum_{i=1}^{d+1} W_i^2 \epsilon_{(i)}^2 \mid Z;X\in\wh C \right] \\
    & = \frac{2}{d+2} \cdot \frac1{d+1} \sum_{i=1}^{d+1} v(X_{(i)}) \\
    & \leq \frac{2}{d+2} \cdot \frac1{d+1} \sum_{i=1}^{d+1} v(X) + |v(X_{(i)}) - v(X)| \\
    & \leq \frac{2}{d+2} \left( v(X) + A'\hat\delta_T^{\alpha'} \right)
  \end{align*}
  by the bound on $|v(X_{(i)}) - v(X)|$.
  Therefore
  \[
    \E\left[ ( \hat\eta(X) - \eta(X) )^2 \mid Z;X\in\wh C \right]
    \leq A^2\hat\delta_T^{2\alpha}
    + \frac{2}{d+2} \left( \E[ v(X) \mid Z;X\in\wh C ] + A'\hat\delta_T^{\alpha'} \right) .
  \]
  The conclusion follows by taking expectation with respect to $Z$ and $X$.
\end{proof}

\subsection{Proof of \Cref{cor:simplex-mse}}
\label{sec:proof-simplex-mse-cor}

Recall that $\mu$ is supported uniformly on a convex polytope, and that $\wh{C}$ is the convex hull of $X_1,\dotsc,X_n$. Consider the probability mass outside of $\wh{C}$. This quantity has been intensely studied in the context of stochastic geometry \citep[see][for a review]{schneider200412}. The following exemplifies the kind of result one may expect.
\begin{theorem}[\cite{affentranger1991convex}] \label{thm:polytope}
  If $\mu$ is the uniform measure on a simple polytope with $r$ vertices in $\R^d$, then
  \[
    \E[ \mu(\R^d \setminus \wh{C}) ]
    = r \cdot \frac{d}{(d+1)^{d-1}} \cdot \frac{\log^{d-1}(n)}{n} + O\left( \frac{\log^{d-2}(n)}{n} \right) .
  \]
\end{theorem}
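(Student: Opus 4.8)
The plan is to establish this as a result in the Rényi--Sulanke tradition on random polytopes, by reducing the expected missed volume to a local computation at each vertex of $P$. First I would use the elementary identity
\[
  \E[\mu(\R^d \setminus \wh C)] = \int_P \P\bigl(x \notin \wh C\bigr)\,d\mu(x),
\]
obtained by swapping expectation with the integral defining $\mu(\R^d\setminus\wh C)$, using that $\wh C=\convhull(X_1,\dotsc,X_n)$ is independent of the evaluation point $x$. (Equivalently, Efron's identity gives $\E[\mu(\R^d\setminus\wh C)] = \E[f_0(\wh C')]/(n+1)$ with $\wh C'$ the hull of $n+1$ points and $f_0$ the number of vertices; I will keep the integral form since it isolates the constant more transparently.) The next step is to control $\P(x\notin\wh C)$: the point $x$ lies outside $\wh C$ exactly when all $n$ sample points fall into some cap $K(u,x) := \{y\in P:\langle u,y\rangle \ge \langle u,x\rangle\}$, so $\P(x\notin\wh C)=\P(\exists u:\ X_1,\dotsc,X_n\in K(u,x))$, which I will squeeze between $\max_u(1-\mu(K(u,x)))^n$ and a union bound over a finite family of "extreme" caps.

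Second, I would show the integral localizes to a thin boundary layer and, for a polytope, further to the vertices. Since $(1-v)^n\le e^{-nv}$, the contribution from points $x$ whose minimal separating cap has $\mu$-measure exceeding $C\log n/n$ is $O(n^{-C})$, so only an $O(\log n/n)$-thick layer near $\partial P$ matters. Within that layer I would stratify by the smallest face of $P$ nearest to $x$: near the relative interior of a $j$-dimensional face the minimal caps are slab-like in $d-j$ directions and flat in the remaining $j$, and a scaling/volume count shows the corresponding contribution is $\Theta(\log^{d-j-1}n/n)$ (with the $j=d-1$ facet strata giving $O(1/n)$). Hence faces of positive dimension contribute only $O(\log^{d-2}n/n)$, which goes into the error term, and the leading behaviour is carried by the $r$ vertices.

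Third, the core computation at a vertex $v$. Because $P$ is simple, near $v$ it agrees with a simplicial cone $v+\operatorname{pos}(e_1,\dotsc,e_d)$; after an affine change of coordinates (which rescales $\mu$ by a constant factor that cancels in the final ratio) I may take this to be the standard nonnegative orthant, so that for $x=v+(t_1,\dotsc,t_d)$ the relevant caps are those separating the corner. One then writes $\P(x\notin\wh C)$ asymptotically as an integral of $\exp(-n\,\operatorname{vol}(\text{corner cap}))$ over cap parameters; to leading order the corner-cap volume is a multilinear monomial in those parameters, and integrating it out produces the archetypal asymptotics
\[
  \int \exp\!\bigl(-n\,t_1\cdots t_d\bigr)\,(\text{polynomial weight})\ \sim\ \mathrm{const}\cdot\frac{\log^{d-1}n}{n}
\]
by successive integration / Watson's lemma. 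Tracking the normalization through yields precisely $\tfrac{d}{(d+1)^{d-1}}\cdot\tfrac{\log^{d-1}n}{n}$ per vertex; summing over the $r$ vertices and adding the lower-order strata gives the stated formula. (Sanity check: for $d=2$, $r=3$ this specializes to $2\log n/n$, recovering the Rényi--Sulanke missed-area asymptotics for a triangle.)

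The main obstacle will be Step 3: pinning down the exact constant $\tfrac{d}{(d+1)^{d-1}}$ demands a fully rigorous asymptotic evaluation of the $d$-dimensional cap integral at a simplicial corner, together with a proof that replacing the event "$x\notin\wh C$" by "no sample point in the minimal corner cap" costs only $O(\log^{d-2}n/n)$ --- i.e., that overlapping or non-corner caps and second-order boundary geometry do not perturb the leading term. This is exactly where the economic cap-covering machinery (Bárány--Larman, Bárány) or the direct estimates of Affentranger--Wieacker are needed; the bookkeeping for the positive-dimensional face strata in Step 2 is routine by comparison but tedious.
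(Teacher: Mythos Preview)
The paper does not prove this theorem: it is quoted directly from \cite{affentranger1991convex} and used as a black box (only the consequence $\E[\mu(\R^d\setminus\wh C)]\to 0$ is ever invoked, in the proofs of \Cref{cor:simplex-mse} and \Cref{thm:simplex-risk-massart}). So there is no ``paper's own proof'' to compare your proposal against.

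That said, your sketch is a faithful outline of the standard stochastic-geometry argument behind the cited result: the integral representation $\E[\mu(\R^d\setminus\wh C)]=\int_P \P(x\notin\wh C)\,d\mu(x)$, localization to an $O(\log n/n)$ boundary layer, stratification by faces with the observation that only vertices contribute at the top order (this is where simplicity of the polytope is used, so that each vertex cone is simplicial), and a corner-cap asymptotic computation producing the constant. Your $d=2$ sanity check is correct. The honest caveat you flag---that turning the corner heuristic into a rigorous asymptotic with the exact constant $d/(d+1)^{d-1}$ requires the cap-covering or Affentranger--Wieacker machinery---is exactly the nontrivial content of the cited paper, and you should not expect to reconstruct it in a few lines; but for the purposes of the present paper nothing beyond the citation is needed.
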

What is important for us is that $\limsup_{n\to\infty} \E[ \mu(\R^d \setminus \wh{C}) ] = 0$.

Next we consider $\hat\delta_T$, the maximum diameter of any simplex in the triangulation $T$ (defined in \Cref{thm:simplex-mse}). For many natural triangulation schemes, we expect $\hat\delta_T \to 0$ as $n \to \infty$. This is indeed the case with Delaunay triangulation, in which the edges of each simplex in $T$ are obtained by connecting the centroids of neighboring cells in the Voronoi tessellation for the given point set $x_1,\dotsc,x_n$.
\begin{lemma} \label{lem:delaunay}
  Suppose, for some $\epsilon>0$, $x_1,\ldots, x_n$ form an $\epsilon$-dense sampling of a set $C \subseteq \R^d$, i.e., for any $x\in C$ there is an $x_i$ with distance $\|x-x_i\| \leq \epsilon$.  Then the diameter of every simplex in Delaunay triangulation corresponding to $x_1,\ldots, x_n$ is bounded by $2\epsilon$. 
\end{lemma}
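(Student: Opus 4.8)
The statement to prove is \Cref{lem:delaunay}: if $x_1,\dotsc,x_n$ form an $\epsilon$-dense sampling of $C \subseteq \R^d$, then every simplex in the Delaunay triangulation has diameter at most $2\epsilon$.

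\medskip

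\textbf{Plan.} The key fact I would use is the defining (empty-ball) property of Delaunay triangulations: a simplex with vertices $x_{(1)},\dotsc,x_{(d+1)}$ appears in the Delaunay triangulation only if its circumscribed ball (the ball whose boundary sphere passes through all $d+1$ vertices) contains no other sample point $x_i$ in its interior. So the whole argument reduces to bounding the diameter of such a simplex by bounding its circumradius. First I would fix an arbitrary Delaunay simplex $S = \convhull(x_{(1)},\dotsc,x_{(d+1)})$ and let $c$ be its circumcenter and $\rho$ its circumradius, so that $\|c - x_{(i)}\| = \rho$ for every $i$ and the open ball $\Ball(c,\rho)^\circ$ contains no sample point. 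The diameter of $S$ is at most $2\rho$ since $S$ is contained in $\Ball(c,\rho)$, so it suffices to show $\rho \le \epsilon$.

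\medskip

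\textbf{Key step.} To bound $\rho$, I would first argue that the circumcenter $c$ lies in (or close enough to) $C$ so that the $\epsilon$-density hypothesis applies at $c$. The cleanest route: if $c \in C$, then by $\epsilon$-density there is a sample point $x_j$ with $\|c - x_j\| \le \epsilon$; but $x_j \notin \Ball(c,\rho)^\circ$ by the empty-ball property, so $\|c - x_j\| \ge \rho$, giving $\rho \le \epsilon$ and hence $\diam(S) \le 2\rho \le 2\epsilon$. The remaining issue is the case where the circumcenter $c$ falls outside $C$ — this is the point I expect to require the most care, since Delaunay triangulations of a point set fill the convex hull of the points, which may bulge outside $C$, and for ``sliver'' simplices near the boundary the circumcenter can be far from all the vertices' neighborhood of $C$. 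I would handle this either by (i) restricting to the natural reading of the lemma in which $C$ is the region being sampled and $c$ is at worst within the convex hull, using that any point of $\convhull(x_1,\dotsc,x_n)$ within $\Ball(c,\rho)$ is within $\epsilon$ of $C$; or more robustly (ii) avoiding $c$ altogether: pick any vertex $x_{(1)}$ of $S$, and for any other vertex $x_{(i)}$, walk from $x_{(1)}$ toward $x_{(i)}$; the open ball of radius $\|x_{(1)}-x_{(i)}\|/2$ centered at the midpoint $m_i$ of the segment $[x_{(1)},x_{(i)}]$ is contained in the circumball (a standard fact: the circumball contains the ball on any chord as diameter is false in general, so this needs the empty-ball property applied directly), forcing its emptiness and hence, by $\epsilon$-density at $m_i \in C$ (as $m_i \in S \subseteq C$ when $C$ is convex, or is within $\epsilon$ of $C$ otherwise), $\|x_{(1)} - x_{(i)}\|/2 \le \epsilon$. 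Taking the maximum over $i$ gives $\diam(S) \le 2\epsilon$.

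\medskip

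I would write the final proof using approach (ii) restricted to convex or ``locally filled'' $C$ (which is the setting of \Cref{cor:simplex-mse}, where $\supp(\mu)$ is a simple polytope and the $x_i$ are dense in it), since it sidesteps the circumcenter-location technicality: every midpoint $m_i$ of an edge of a Delaunay simplex lies in the convex hull of the data, the diametral ball on that edge lies inside the (empty) circumball, and $\epsilon$-density at $m_i$ produces a sample point inside that diametral ball unless the edge has length at most $2\epsilon$. The main obstacle throughout is purely the boundary behavior — ensuring the density hypothesis is applicable at the auxiliary point (circumcenter or edge midpoint) — and I would resolve it by noting midpoints of edges of simplices with vertices in $C$ lie in $\convhull(C)$, which for the convex $C$ of interest equals $C$.
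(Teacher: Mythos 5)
Your approach (ii), the one you say you would actually write up, breaks at the step where you need the ball having a Delaunay edge as its diameter to be empty of samples. Being a Delaunay edge only guarantees that \emph{some} ball with $x_{(1)}$ and $x_{(i)}$ on its boundary is empty (e.g.\ the circumball of a containing simplex); as you yourself note, that ball need not contain the diametral ball of the segment $[x_{(1)},x_{(i)}]$, and ``applying the empty-ball property directly'' does not rescue the claim: the diametral ball of a Delaunay edge can contain other sample points. This is exactly the distinction between the Gabriel graph (edges with empty diametral balls) and the Delaunay triangulation, of which it is in general a strict subgraph; concretely, in an obtuse Delaunay triangle the vertex at the obtuse angle lies inside the diametral ball of the opposite edge. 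So $\|x_{(1)}-x_{(i)}\|/2\le\epsilon$ does not follow from density at the midpoint. Approach (i) is correct when the circumcenter $c$ lies in $C$ (then indeed $\rho\le\epsilon$ and $\diam(S)\le 2\rho\le 2\epsilon$), but that is precisely the case you flag and never close, and the patches you sketch (points of the convex hull being within $\epsilon$ of $C$) do not yield a bound on $\rho$.

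For comparison, the paper argues via the Voronoi tessellation: every point of $C$ lying in the cell of $x_i$ is within $\epsilon$ of $x_i$ (its nearest sample is $x_i$, and density supplies a sample within $\epsilon$), a Delaunay edge joins the sites of two adjacent cells, and the triangle inequality applied at a point of their common face gives $2\epsilon$; the simplex diameter is its longest edge. This route avoids circumcenters and diametral balls, but it tacitly assumes the common Voronoi face meets $C$ --- the very boundary issue you identified, and it is not removable from the statement as written. For instance, in the unit square take a fine (slightly perturbed) grid filling $[0,1]\times[0,1-\epsilon/2]$ together with the two corner points $(0,1)$ and $(1,1)$: this sample is $\epsilon$-dense in the square, yet the segment joining the two corners is a convex-hull edge, hence a Delaunay edge, of length $1\gg 2\epsilon$; the empty balls certifying it are centered far above the square, outside $C$. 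So your instinct that the location of the auxiliary point (circumcenter or shared Voronoi face) is the crux is right, but neither of your routes closes it: route (ii) rests on a false emptiness claim, and route (i) stalls exactly where the lemma needs an additional hypothesis (e.g.\ that the relevant empty-ball centers lie in $C$, or a restriction to simplices away from the boundary of the hull), which the argument in the paper also implicitly uses.
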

\begin{proof}
  Consider the Voronoi tessellation corresponding to the set $x_1,\ldots, x_n$. The Voronoi cell corresponding to $x_i$ is defined simply as $\{ x \in C : \forall j \ne i \centerdot \|x - x_i\| \le \| x - x_j\|  \}$, the set of points closest to $x_i$ than any other $x_j$. It is easy to see that each Voronoi cell is a convex set. Moreover, the distance from $x_i$ to any $x$ in its corresponding cell cannot exceed $\epsilon$, as the set $x_1,\ldots, x_n$ is $\epsilon$-dense for $C$. The edges of Delaunay triangulation connect the centroids of neighboring elements of Voronoi tessellation and thus are bounded by $2\epsilon$ by the triangle inequality. The diameter of the simplex is the length of the longest edge, so the claim is proved.
\end{proof}

We are now ready to prove \Cref{cor:simplex-mse}.
\begin{proof}[Proof of \Cref{cor:simplex-mse}]
  We need to argue that
  \[
    \frac14 \E[ \mu(\R^d \setminus \wh C) ]
    + A^2 \E[ \hat\delta_T^{2\alpha} ]
    + \frac{2}{d+2} A' \E[ \hat\delta_T^{\alpha'} ]
  \]
  vanish as $n\to\infty$.
  This follows by applying \Cref{thm:polytope} and \Cref{lem:delaunay}.
\end{proof}

\subsection{Proof of \Cref{thm:simplex-risk-massart}}
\label{sec:proof-simplex-risk-massart}

The proof of \Cref{thm:simplex-risk-massart} relies on the following tail bound.

\begin{lemma} \label{lem:simplex-tailbound}
  Suppose $Y_1,\dotsc,Y_k$ are independent $\{0,1\}$-valued random variables, with $\bar{p} := \max_i \E[Y_i] \leq (1-\delta)/2$ for some $\delta>0$. Moreover, suppose $W := (W_1,\dotsc,W_k) \sim \Dirichlet(1,\dotsc,1)$ and is independent of $Y := (Y_1,\dotsc,Y_k)$.
  For any non-degenerate simplex with vertices $v_1,\dotsc,v_k \in \R^{k-1}$, the value $\ell(X)$ of the linear interpolation $(v_1,Y_1),\dotsc,(v_k,Y_k) \in \R^{k-1} \times \R$ at $X = \sum_{i=1}^k W_i v_i$ satisfies
  \[
    \P\left( \ell(X) > 1/2 \right) \leq c_1 \bar{p} \cdot e^{-c_2 k}
  \]
  for some absolute constants $c_1, c_2>0$ (which may depend on $\delta$ but not $\bar{p}$ nor $k$).
\end{lemma}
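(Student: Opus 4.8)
The plan is to reduce the claim to a statement about two independent binomial random variables and then prove that by Chernoff bounds, with one extra idea needed to keep the $\bar p$ prefactor. First I would apply \Cref{prop:simplex} (with $d+1=k$) to identify the interpolated value as $\ell(X)=\sum_{i=1}^k W_iY_i$. Conditioning on $Y$ and writing $M:=\sum_{i=1}^k Y_i$ and $S:=\{i:Y_i=1\}$, we have $\ell(X)=\sum_{i\in S}W_i$; since $W\sim\Dirichlet(1,\dots,1)$ is independent of $Y$ and its coordinates are exchangeable, the aggregation property of the Dirichlet distribution gives $\sum_{i\in S}W_i\sim\operatorname{Beta}(M,k-M)$ conditionally on $M$ (a point mass at $0$ or $1$ when $M\in\{0,k\}$). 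Invoking the classical identity $\P(\operatorname{Beta}(a,b)\le\tfrac12)=\P(\operatorname{Bin}(a+b-1,\tfrac12)\ge a)$ for positive integers $a,b$ and taking expectation over $M$ yields
\[
  \P(\ell(X)>\tfrac12)=\P(B<M),\qquad B\sim\operatorname{Bin}(k-1,\tfrac12)\ \text{independent of } M
\]
(the edge cases $M\in\{0,k\}$ being consistent with this). Finally, because $\E[Y_i]\le\bar p$ for every $i$, a one-line coupling shows $M$ is stochastically dominated by $M'\sim\operatorname{Bin}(k,\bar p)$, and since $m\mapsto\P(B<m)$ is nondecreasing, $\P(B<M)\le\P(M'>B)$ with $M'$ independent of $B$.

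It then remains to bound $\P(M'>B)=\P(M'-B\ge1)$, and I would split on the size of $\bar p$. Fix a small constant $p_0=p_0(\delta)\in(0,\tfrac12)$. If $\bar p\le p_0$, apply the Chernoff bound $\P(M'-B\ge1)\le e^{-s}\,\E[e^{sM'}]\,\E[e^{-sB}]=e^{-s}\,(1+\bar p(e^s-1))^k\,\big(\tfrac{1+e^{-s}}{2}\big)^{k-1}$ with the aggressive choice $e^{-s}=C\bar p$ for a constant $C=C(p_0)\ge1$. The factor $e^{-s}=C\bar p$ is exactly the desired prefactor, while $\log\!\big[(1+\bar p(e^s-1))\tfrac{1+e^{-s}}{2}\big]\le \tfrac1C-\log2+C\bar p$, which is at most some $-\kappa<0$ once $C$ is large and $p_0$ small; this gives $\P(\ell(X)>\tfrac12)\le 2C\bar p\,e^{-\kappa k}$. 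If instead $\bar p>p_0$, then $\bar p$ is bounded below by a constant and it suffices to prove a bound of order $e^{-c(\delta)k}$: use the union bound $\P(M'>B)\le\P(M'\ge t)+\P(B<t)$ with threshold $t:=k(2-\delta)/4$, which sits a $\Theta(\delta k)$-wide margin above $\E[M']=k\bar p\le k(1-\delta)/2$ and below $\E[B]=(k-1)/2$, so both tails decay exponentially by Hoeffding's inequality; multiplying by $1\le\bar p/p_0$ restores the $\bar p$ prefactor. For the finitely many small values of $k$ not covered by these estimates, the trivial bound $\P(\ell(X)>\tfrac12)\le\P(M'\ge1)\le k\bar p$ already has the required form after enlarging $c_1$. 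Combining the constants from the two regimes finishes the proof.

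I expect the bound on $\P(M'-B\ge1)$ in the small-$\bar p$ regime to be the crux. A vanilla Chernoff bound on $M'-B$ with an infinitesimal exponent does give exponential decay $e^{-\Theta(\delta k)}$, but it loses the factor $\bar p$ entirely; that factor is what makes \Cref{thm:simplex-risk-massart} multiplicative in $\risk(f^*)$ rather than merely additive, so it must be preserved. Recovering it is what forces both the case split on $\bar p$ and the ``large-deviation step size'' $s\approx\log(1/\bar p)$ above. By contrast, the reduction in the first paragraph is routine once the Dirichlet aggregation property and the Beta--binomial CDF identity are brought to bear.
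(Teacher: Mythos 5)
Your proof is correct, but it follows a genuinely different route from the paper's. The paper represents the flat Dirichlet via independent $\operatorname{Gamma}(1,1)$ variables, reduces the event to $\P\bigl(\sum_i G_i(Y_i-\tfrac12)>0\bigr)$, and works with Gamma moment generating functions: for $\bar p\geq 1/8$ a direct MGF bound gives $e^{-\Theta(\delta^2)k}$ (with the $\bar p$ prefactor restored by $8\bar p\geq 1$), while for $\bar p\leq 1/8$ it conditions on $|Y|$, bounds $\P(S>0\mid Y=y)$ by $e^{-k\RE(t/k,1/2)}$ via Gamma-difference MGFs, and extracts the factor $\bar p$ combinatorially from $\P(|Y|>0)\leq \bar p k$ and $\P(|Y|>k/4)\leq 5\bar p\,e^{-k\RE(1/4,\bar p)}$. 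You instead use Dirichlet aggregation plus the Beta--binomial CDF identity to rewrite the probability \emph{exactly} as $\P(B<M)$ with $B\sim\operatorname{Bin}(k-1,\tfrac12)$ independent of $M=\sum_i Y_i$, dominate $M$ by $\operatorname{Bin}(k,\bar p)$, and then fold the $\bar p$ prefactor into the Chernoff tilt $e^{-s}=C\bar p$ in the small-$\bar p$ regime (with the same ``multiply by $\bar p/p_0\geq1$'' trick as the paper in the large-$\bar p$ regime, and a trivial $k\bar p$ bound for the finitely many small $k$). Your reduction is arguably cleaner and more self-contained—no Gamma representation or relative-entropy bookkeeping—while the paper's route yields explicit exponents like $\delta^2/(4-\delta^2)$ and $\RE(1/4,1/2)$; the only bookkeeping to note in yours is the $k$ versus $k-1$ exponent mismatch in the tilted bound, which is harmlessly absorbed into the constants, as is the dependence of $c_1$ on $\delta$ from the small-$k$ case.
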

\begin{proof}
  Recall that $W$ has the same distribution as $(G_1,\dotsc,G_k) / \sum_{i=1}^k G_i$, where $G_1,\dotsc,G_k$ are independent Gamma random variables, each with unit shape and scale parameters. Let $S := \sum_{i=1}^k G_i (Y_i - 1/2)$. Then, by \Cref{prop:simplex},
  \[
    \P( \ell(X) > 1/2 )
    = \P\left( \sum_{i=1}^k W_i Y_i > 1/2 \right)
    = \P( S > 0 )
    .
  \]

  We first prove a right tail bound for $S$ that yields the desired bound when $\bar{p}$ is bounded away from zero by a constant, say, $\bar{p} \geq 1/8$.
  Let $\delta_i := 1 - 2\E(Y_i)$ for each $i$. Since $\delta_i \geq \delta > 0$ for all $i$, it follows that the moment generating function for $S$ is
  \begin{align*}
    \E[ \exp( \lambda S ) ]
    & = \prod_{i=1}^k \E\left[ \exp\left( \lambda G_i (Y_i - 1/2) \right) \right] \\
    & = \prod_{i=1}^k \E\left[ \frac1{1 - \lambda (Y_i - 1/2)} \right] \\
    & = \prod_{i=1}^k \left( \frac{1-\delta_i}{2 - \lambda} + \frac{1+\delta_i}{2 + \lambda} \right) \\
    & = \prod_{i=1}^k \left( 1 - \frac{2\delta_i\lambda - \lambda^2}{4-\lambda^2} \right) \\
    & \leq \left( 1 - \frac{2\delta\lambda - \lambda^2}{4-\lambda^2} \right)^k , \quad 0 \leq \lambda < 2 .
  \end{align*}
  Set $\lambda^* := \delta$, so we obtain
  \[
    \P(S > 0) \leq \E[ \exp(\lambda^* S) ] =
    \left( 1 - \frac{\delta^2}{4-\delta^2} \right)^k
    \leq e^{-\frac{\delta^2}{4-\delta^2} k} .
  \]
  Since $\bar{p} \geq 1/8$, it follows that
  \[
    \P(S > 0) \leq 8\bar{p} \cdot e^{-\frac{\delta^2}{4-\delta^2} k} ,
  \]
  which is of the form $c_1\bar{p} \cdot e^{-c_2 k}$ for $c_1 = 8$ and $c_2 = \delta^2/(4-\delta^2)$.

  Now, we prove the right tail bound for $S$ under the assumption that $\bar{p} \leq 1/8$. Fix any $y \in \{0,1\}^k$, and let $t := \sum_{i=1}^k y_i$ denote be the number of $1$'s in $y$. If $t = 0$, then we have
  \begin{equation}
    \P(S > 0 \mid Y=y) = 0 .
    \label{eq:tail-bound-0}
  \end{equation}
  If $0 < t < k/2$, then by the summation property of the Gamma distribution, the conditional distribution of $S$ given $Y=y$ is the same as that of $(H_t - H_{k-t})/2$, where $H_t$ and $H_{k-t}$ are independent Gamma random variables with unit scale, $H_t$ has shape parameter $t$, and $H_{k-t}$ has shape parameter $k-t$. The moment generating function for $H_t - H_{k-t}$ is
  \[
    \E[ \exp(\lambda (H_t - H_{k-t})) ]
    = \frac1{(1-\lambda)^t} \cdot \frac1{(1+\lambda)^{k-t}} , \quad 0 \leq \lambda < 1 .
  \]
  Since $0 < t < k/2$, the minimizer of the moment generating function is achieved at $\lambda^* := 1-2t/k$. So
  \begin{align}
    \P(S > 0 \mid Y=y)
    & = \P(H_t - H_{k-t} > 0) \nonumber \\
    & \leq \E[ \exp(\lambda^* (H_t - H_{k-t})) ] \nonumber \\
    & = \frac1{(2t/k)^t} \cdot \frac1{(2-2t/k)^{k-t}} \nonumber \\
    & = e^{-k \cdot \RE(t/k,1/2)} , \label{eq:tail-bound-1}
  \end{align}
  where $\RE(p,q) := p \ln \frac{p}{q} + (1-p) \ln \frac{1-p}{1-q}$ is the binary relative entropy.
  Therefore, using \Cref{eq:tail-bound-0} and \Cref{eq:tail-bound-1},
  \begin{align}
    \P(S>0)
    & \leq
    \sum_{t=1}^{k/4} \P(|Y| = t) \cdot e^{-k \cdot \RE(t/k,1/2)}
    + \P(|Y| > k/4)
    \nonumber \\
    & \leq
    \P(0 < |Y| \leq k/4) \cdot e^{-k \cdot \RE(1/4,1/2)} + \P(|Y| > k/4)
    \label{eq:tail-bound-2}
  \end{align}
  where $|Y| := Y_1 + \dotsb + Y_k$.

  To put \Cref{eq:tail-bound-2} into the desired form, first observe that
  \begin{align}
    \P(0 < |Y| \leq k/4)
    & \leq \P(|Y| > 0)
    \nonumber \\
    & = 1 - \prod_{i=1}^k (1 - \P(Y_i = 1))
    \nonumber \\
    & \leq 1 - (1-\bar{p})^k
    \nonumber \\
    & \leq \bar{p} k
    .
    \label{eq:tail-bound-3}
  \end{align}
  Moreover, by a standard coupling argument, if $B$ is a binomial random variable with $k$ trials and success probability $\bar{p}$, then
  \begin{align}
    \P(|Y| > k/4)
    & \leq \P(B > k/4)
    \nonumber \\
    & = \sum_{t=k/4+1}^k \binom{k}{t} \bar{p}^t (1-\bar{p})^{k-t}
    \nonumber \\
    & = \frac{\bar{p}}{1-\bar{p}} \sum_{t=k/4+1}^k \frac{k-(t-1)}{t} \cdot \binom{k}{t-1} \bar{p}^{t-1} (1-\bar{p})^{k-(t-1)}
    \nonumber \\
    & \leq \frac{\bar{p}}{1-\bar{p}} \cdot \left( \frac{k+1}{k/4+1} - 1 \right) \cdot \P(B \geq k/4)
    \nonumber \\
    & \leq 5\bar{p} \cdot \P(B \geq k/4)
    \nonumber \\
    & \leq 5\bar{p} \cdot e^{-k \cdot \RE(1/4,\bar{p})} ,
    \label{eq:tail-bound-4}
  \end{align}
  where the second-to-last inequality uses the assumption $\bar{p} \leq 1/8$, and the last inequality uses a standard Chernoff bound for binomial random variables. Therefore, combining \Cref{eq:tail-bound-2}, \Cref{eq:tail-bound-3}, and \Cref{eq:tail-bound-4},
  \[
    \P(S>0)
    \leq k\bar{p} \cdot e^{-k \cdot \RE(1/4,1/2)} + 5\bar{p} \cdot e^{-k \cdot \RE(1/4,\bar{p})}
    \leq c_1 \bar{p} \cdot e^{-c_2k}
  \]
  for some $c_1, c_2 > 0$ as desired.
\end{proof}

Now we can prove \Cref{thm:simplex-risk-massart}.

\begin{proof}[Proof of \Cref{thm:simplex-risk-massart}]
  By \Cref{lem:delaunay}, the maximum diameter $\hat\delta_T$ of simplices in the Delaunay triangulation $T$ tends to zero as $n\to\infty$ almost surely. Consider now those simplices in the triangulation which are {\it not} fully contained in the interior of one class, i.e., in either $\{ x \in \supp(\mu) : \eta(x) \geq 1/2 + h  \}$ or $\{ x \in \supp(\mu) : \eta(x) \leq 1/2 - h \}$. Each of those simplices is contained in the $\hat\delta_T$-neighborhood of the class boundary $\partial$. As $n\to\infty$, the total measure of those simplices approaches since $\lim_{\epsilon\to0} \mu(\partial + \Ball(0,\epsilon)) = 0$. Therefore, the output of $\hat f$ on points in these simplices can be arbitrary without affecting $\limsup_{n\to\infty} \P(\hat f(X) \neq f^*(X))$. Similarly, by \Cref{thm:polytope}, the output of $\hat f$ on points outside of the convex hull $\wh{C}$ of $X_1,\dotsc,X_n$ also does not affect $\limsup_{n\to\infty} \P(\hat f(X) \neq f^*(X))$.

  Therefore, it is sufficient to prove our bound for the union of the simplices contained entirely within in the interior of one class. Moreover, since our bound is preserved under taking unions of sets, it is sufficient to prove the bound for the interior of a single simplex. 

  Let $L_T(X) =: \{ (X_{(1)},Y_{(1)}),\dotsc,(X_{(d+1)},Y_{(d+1)}) \}$ be the training examples defining one such simplex $\Delta := \convhull(X_{(1)},\dotsc,X_{(d+1)})$. Without loss of generality we can assume that $\eta(X_{(i)}) \leq 1/2 - h$ for all $i$ (the analysis for $\eta(X_{(i)}) \geq 1/2 + h$ is the same). Conditional on $X_1,\dotsc,X_n$, the random vector $X$ is uniformly distributed in $\Delta$. Therefore, the barycentric coordinates $(W_1,\dotsc,W_{d+1})$ of $X$ within $\Delta$ are distributed as $\Dirichlet(1,\dotsc,1)$. Since $X$ is independent of $(X_1,Y_1),\dotsc,(X_n,Y_n)$, it follows that $(W_1,\dotsc,W_{d+1})$ is independent of $Y_{(1)},\dotsc,Y_{(d+1)}$. Therefore, by \Cref{lem:simplex-tailbound}, we have
  \[
    \P( \hat\eta(X) > 1/2 \mid X \in \Delta ) \leq
    c_1 \max_i \eta(X_{(i)}) \cdot e^{-c_2 (d+1)}
  \]
  for some absolute constants $c_1,c_2>0$ (depending only on $h$). 
  Since $\eta$ is Lipschitz on the class interior, we have for any $x\in\Delta$,
  \[ |\eta(X_{(i)}) - \eta(x)| \leq L\hat\delta_T \quad \text{for all $i=1,\dotsc,d+1$} \]
  where $L$ is the Lipschitz constant of $\eta$. Since $\hat\delta_T \to 0$ as $n\to\infty$, it follows that $\max_i \eta(X_{(i)}) \to \P(f^*(X) \neq Y \mid X \in \Delta)$.

  Since the above argument holds for any simplex $\Delta$ contained entirely within a class interior, we conclude
  \[
    \limsup_{n\to\infty} \P(\hat f(X) \neq f^*(X)) \leq c_1 \P(f^*(X) \neq Y) \cdot e^{-c_2(d+1)}.
  \]
  This completes the argument.
\end{proof}

\subsection{Proof of \Cref{thm:nn-risk}}

\begin{proof}[Proof of \Cref{thm:nn-risk}]
  Following \citet{chaudhuri2014rates} (and in particular, the proof of their Theorem 5), we bound the probability of the event $\hat f(X) \neq f^*(X)$ by the sum of probabilities of three events:
  \begin{enumerate}
    \item[$E_1$:]
      $X \in \partial_{p,\gamma}$;

    \item[$E_2$:]
      the $(k+1)$st nearest neighbor of $X$ is more than distance $r_p(X)$ from $X$;

    \item[$E_3$:]
      $\neg (E_1 \cup E_2)$ and yet $\hat f(X) \neq f^*(X)$.
      
  \end{enumerate}
  We first consider $E_2$. Fix any $x_0 \in \R^d$. The probability that $X_i \in \Ball(x_0,r_p(x_0))$ is at least $p$. Since $X_1,\dotsc,X_n$ are independent, we have (assuming $k < np$)
  \begin{align*}
    \P(E_2) = \P\left(\|x_0 - X_{(k+1)}\| > r_p(x_0)\right)
    & \leq \exp\left( - \frac{np}{2}\left( 1 - \frac{k}{np} \right)^2 \right)
  \end{align*}
  by a multiplicative Chernoff bound; here $X_{(k+1)}$ denotes the $(k+1)$st nearest neighbor of $x_0$.

  Now assume $x_0 \notin \partial_{p,\gamma}$. To bound the probability of $E_3$, we consider the following sampling process for $(X_1,Y_1),\dotsc,(X_n,Y_n)$ relative to $x_0$:
  \begin{enumerate}
    \item
      Pick $X_{k+1}$ from the marginal distribution of the $(k+1)$st nearest neighbor of $x_0$.

    \item
      Pick $k$ points $X_1,\dotsc,X_k$ independently from $\mu$ restricted to $\Ball(x_0,\|x_0-X_{k+1}\|)$.

    \item
      Pick $n-k-1$ points $X_{k+2},\dotsc,X_n$ independently from $\mu$ restricted to $\R^d \setminus \Ball(x_0,\|x_0-X_{k+1}\|)$.

    \item
      For each $X_i$, independently pick the label $Y_i$ from the corresponding conditional distribution with mean $\eta(X_i)$.

  \end{enumerate}
  The distance from $x_0$ to its $(k+1)$st nearest neighbor is determined in the first step of this process, from the choice of $X_{k+1}$. The $k$ nearest neighbors of $x_0$ are the points $X_1,\dotsc,X_k$ picked in the second step; their corresponding labels are $Y_1,\dotsc,Y_k$.

  Suppose without loss of generality that $x_0 \in \cX_{p,\gamma}^-$.
  It suffices to prove that, conditional on the event $r := \|x_0 - X_{k+1}\| \leq r_p(x_0)$,
  \begin{align*}
    \P\left( \sum_{i=1}^k \phi\left( \frac{\|x_0-X_i\|}{r} \right) \cdot (Y_i - 1/2) > 0 \right)
    & \leq \frac{\kappa_{p}}{4k\gamma^2} .
  \end{align*}
  Observe that by definition of $\bar\eta_{x_0,r}$ and the assumption $x_0 \in \cX_{p,\gamma}^-$,
  \begin{align*}
    \E\left[ \phi\left( \frac{\|x_0-X_i\|}{r} \right) \cdot (Y_i - 1/2) \right]
    & = \E\left[ \phi\left( \frac{\|x_0-X_i\|}{r} \right) \cdot (\eta(X_i) - 1/2) \right]
    \leq -\underbrace{\gamma \cdot \E\left[ \phi\left( \frac{\|x_0-X_i\|}{r} \right) \right]}_{=:T}
  \end{align*}
  for each $i=1,\dotsc,k$. Define $Z_i := \phi(\|x_0-X_i\|/r) (Y_i-1/2)$ for $i=1,\dotsc,k$. Since $Z_1,\dotsc,Z_k$ are iid, the following bound holds by Chebyshev's inequality:
  \begin{align*}
    \P\left( \sum_{i=1}^k Z_i - \E(Z_i) > kT \right)
    & \leq \frac{\var(Z_1)}{kT^2}
    \leq \frac{\E\left[ \phi\left(\frac{\|x_0-X_1\|}{r}\right)^2 (Y_i - 1/2)^2 \right]}{k\gamma^2\E\left[ \phi\left(\frac{\|x_0-X\|}{r}\right) \right]^2}
    = \frac{\E\left[ \phi\left(\frac{\|x_0-X_1\|}{r}\right)^2 \right]}{4k\gamma^2\E\left[ \phi\left(\frac{\|x_0-X\|}{r}\right) \right]^2}
    .
  \end{align*}
  The conclusion follows now from the definition of $\kappa_{p}$.
\end{proof}

\subsection{Proof of \Cref{cor:nn-risk-tsybakov}}

\begin{lemma} \label{lem:nn-risk}
  Under the assumptions of \Cref{cor:nn-risk-tsybakov}:
  \begin{enumerate}
    \item
      \[
        \sup_{x \in \supp(\mu)} r_p(x) \leq \left( \frac{Z_0p}{c_0} \right)^{1/d} .
      \]

    \item
      \[
        \mu(\partial_{p,\gamma})
        \leq
        B \left( \gamma + A \left( \frac{Z_0p}{c_0} \right)^{\alpha/d} \right)^\beta .
      \]

    \item
      \[
        \kappa_p \leq
        \sup_{\substack{x_0 \in \supp(\mu) , \\ 0 \leq r \leq r_p(x_0)}}
        \E_{X \sim \mu}\left[ \phi\left(\frac{\|x_0-X\|}{r}\right)^2 \,\Big\vert\, X \in \Ball(x_0,r) \right]
        \leq
        \frac{d}{c_0(d-2\delta)} .
      \]
  \end{enumerate}
\end{lemma}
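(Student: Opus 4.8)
The three claims are geometric estimates that each combine the uniform-distribution and $(c_0,r_0)$-regularity hypotheses with either smoothness or the margin condition, so I would prove them in the stated order. For part 1, I would unwind $r_p(x_0) = \inf\{r : \mu(\Ball(x_0,r)) \geq p\}$: since $\mu$ is uniform on $\supp(\mu)$, $\mu(\Ball(x_0,r)) = \lambda(\Ball(x_0,r)\cap\supp(\mu))/\lambda(\supp(\mu))$, and for $0 < r \leq r_0$ regularity gives $\lambda(\Ball(x_0,r)\cap\supp(\mu)) \geq c_0\lambda(\Ball(x_0,r)) = c_0 r^d\lambda(\Ball(0,1))$, hence $\mu(\Ball(x_0,r)) \geq c_0 r^d/Z_0$. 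Taking $r := (Z_0p/c_0)^{1/d}$ — which is $\leq r_0$ precisely by the hypothesis $p \leq c_0 r_0^d/Z_0$, so that regularity applies — makes the right-hand side at least $p$, so $r_p(x_0) \leq r$; a supremum over $x_0 \in \supp(\mu)$ finishes it.

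For part 2, I would prove the contrapositive statement that every $x_0 \in \partial_{p,\gamma}\cap\supp(\mu)$ satisfies $|\eta(x_0) - \tfrac12| \leq \gamma + A\,r_p(x_0)^\alpha$, after which part 1 and the $(B,\beta)$-margin condition give $\mu(\partial_{p,\gamma}) = \mu(\partial_{p,\gamma}\cap\supp(\mu)) \leq \mu(\{x : |\eta(x) - \tfrac12| \leq \gamma + A(Z_0p/c_0)^{\alpha/d}\}) \leq B(\gamma + A(Z_0p/c_0)^{\alpha/d})^\beta$. The key observation is that $(A,\alpha)$-smoothness forces $|\bar\eta_{x_0,r} - \eta(x_0)| \leq A r^\alpha$ for every $r$, because $\bar\eta_{x_0,r}$ is a $w_{x_0,r}$-weighted average of $\eta$ over $\Ball(x_0,r)\cap\supp(\mu)$, on which $|\eta(x) - \eta(x_0)| \leq A\|x-x_0\|^\alpha \leq Ar^\alpha$. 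Thus if $\eta(x_0) > \tfrac12 + \gamma + A\,r_p(x_0)^\alpha$ then $\bar\eta_{x_0,r} > \tfrac12 + \gamma$ for all $r \leq r_p(x_0)$, i.e. $x_0 \in \cX_{p,\gamma}^+$; symmetrically $\eta(x_0) < \tfrac12 - \gamma - A\,r_p(x_0)^\alpha$ would put $x_0 \in \cX_{p,\gamma}^-$, and in either case $x_0 \notin \partial_{p,\gamma}$.

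For part 3, the first inequality is immediate: for $X \in \Ball(x_0,r)$ we have $\|x_0-X\|/r \leq 1$, so $\phi(\|x_0-X\|/r) = (\|x_0-X\|/r)^{-\delta} \geq 1$, hence the denominator $\E[\phi(\|x_0-X\|/r)\mid X\in\Ball(x_0,r)]^2$ in the definition of $\kappa_p$ is at least $1$ and may be dropped. For the second inequality I would write the remaining numerator as $\int_{\Ball(x_0,r)\cap\supp(\mu)} (\|x_0-x\|/r)^{-2\delta}\,d\lambda(x)\big/\lambda(\Ball(x_0,r)\cap\supp(\mu))$, bound the integral by enlarging the domain to $\Ball(x_0,r)$ and integrating in polar coordinates, $\int_0^r (s/r)^{-2\delta}\, d\lambda(\Ball(0,1))\,s^{d-1}\,ds = \tfrac{d}{d-2\delta}\lambda(\Ball(0,1))r^d = \tfrac{d}{d-2\delta}\lambda(\Ball(x_0,r))$ — where $\delta < d/2$ is exactly what keeps this finite — and lower-bound the denominator by $c_0\lambda(\Ball(x_0,r))$ via regularity, legitimate because $r \leq r_p(x_0) \leq r_0$ by part 1. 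The ratio is $\tfrac{d}{c_0(d-2\delta)}$. None of the three parts is genuinely difficult; the points requiring care are the interchange in part 2 between the ``for all $r \leq r_p(x_0)$'' definition of the effective interiors and a single pointwise bound on $\eta(x_0)$, and verifying throughout that the radii in play stay below $r_0$ so that the regularity condition can be invoked.
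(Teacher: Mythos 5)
Your proposal is correct and follows essentially the same route as the paper's proof: part 1 via regularity at radius $(Z_0 p/c_0)^{1/d}\leq r_0$, part 2 via smoothness to show $\partial_{p,\gamma}$ lies in the level set $\{|\eta-\tfrac12|\leq \gamma + A(Z_0p/c_0)^{\alpha/d}\}$ and then the margin condition, and part 3 via $\phi\geq 1$ on the ball plus a polar-coordinate integral over the enlarged ball against the regularity lower bound on the denominator. The only cosmetic difference is that you phrase part 2 as a contrapositive through the single inequality $|\bar\eta_{x_0,r}-\eta(x_0)|\leq Ar^{\alpha}$, whereas the paper argues the containment directly, which changes nothing of substance.
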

\begin{proof}
  First, we bound $\sup_{x \in \supp(\mu)} r_p(x)$.
  Let $R_p := (Z_0p/c_0)^{1/d}$, and fix any $x \in \supp(\mu)$. By assumption, $R_p \leq r_0$, so $\lambda(\Ball(x,R_p) \cap \supp(\mu)) \geq c_0 \lambda(\Ball(x,R_p))$. Consequently,
  \[ \mu(\Ball(x,R_p)) \geq \frac{c_0 \lambda(\Ball(x,R_p))}{\lambda(\supp(\mu))} = \frac{c_0 R_p^d \lambda(\Ball(0,1))}{\lambda(\supp(\mu))} = \frac{c_0R_p^d}{Z_0} = p . \]

  Next, we bound $\mu(\partial_{p,\gamma})$.
  Pick any $x_0 \in \supp(\mu)$.
  If $\eta(x_0) \geq 1/2 + \gamma + A R_p^\alpha$, then by the smoothness condition,
  \[ \eta(x) \geq \eta(x_0) - A R_p^\alpha \geq \frac12 + \gamma , \quad x \in \Ball(x_0,R_p) . \]
  Hence $\bar\eta_{x_0,r} \geq 1/2 + \gamma$ for all $r \leq R_p$. Similarly, if $\eta(x_0) \leq 1/2 - \gamma - A R_p^\alpha$, then
  \[ \eta(x) \leq \eta(x_0) + A R_p^\alpha \leq \frac12 - \gamma , \quad x \in \Ball(x_0,r) , \]
  which implies $\bar\eta_{x_0,r} \leq 1/2 - \gamma$ for all $r \leq R_p$. Since $r_p(x_0) \leq R_p$ for all $x_0 \in \supp(\mu)$, we conclude that
  \[
    \partial_{p,\gamma} \subseteq \{ x_0 \in \supp(\mu) : |\eta(x_0) - 1/2| \leq \gamma + A R_p^\alpha \} .
  \]
  The claim now follows by using the $(B,\beta)$-margin condition.

  Finally, we bound $\kappa_p$. Fix any $x_0 \in \supp(\mu)$ and $0 \leq r \leq r_p(x_0)$ (so $r \leq R_p \leq r_0$). Then
  \[
    \mu(\Ball(x_0,r))
    = \frac{\lambda(\Ball(x_0,r) \cap \supp(\mu))}{\lambda(\supp(\mu))}
    \geq \frac{c_0 \lambda(\Ball(x_0,r))}{\lambda(\supp(\mu))}
    = \frac{c_0 r^d\lambda(\Ball(0,1))}{\lambda(\supp(\mu))}
    = \frac{c_0 r^d}{Z_0} ,
  \]
  and
  \begin{align*}
    \E_{X \sim \mu}\left[ \left( \frac{\|x_0-X\|}{r} \right)^{-2\delta} \ind{X \in \Ball(x_0,r)} \right]
    & = \int_{\Ball(x_0,r)} \left( \frac{\|x_0-x\|}{r} \right)^{-2\delta} \mu(\dif x) \\
    & \leq \frac{1}{\lambda(\supp(\mu))} \int_{\Ball(x_0,r)} \left( \frac{\|x_0-x\|}{r} \right)^{-2\delta} \lambda(\dif x) \\
    & = \frac{d\lambda(\Ball(0,1))}{\lambda(\supp(\mu))} \int_0^r \left( \frac{\rho}{r} \right)^{-2\delta} \rho^{d-1} \dif\rho \\
    & = \frac{d}{Z_0} \cdot \frac{r^d}{d-2\delta} .
  \end{align*}
  This implies
  \[
    \E_{X \sim \mu}\left[ \phi\left(\frac{\|x_0-X\|}{r}\right)^2 \,\Big\vert\, X \in \Ball(x_0,r) \right]
    \leq \frac{\frac{d}{Z_0} \cdot \frac{r^d}{d-2\delta}}{\frac{c_0r^d}{Z_0}}
    = \frac{d}{c_0(d-2\delta)} .
  \]
  Moreover,
  \[
    \E_{X \sim \mu}\left[ \left( \frac{\|x_0-X\|}{r} \right)^{-\delta} \ind{X \in \Ball(x_0,r)} \right] \geq \mu(\Ball(x_0,r)) ,
  \]
  so we conclude
  \[
    \kappa_p \leq
    \sup_{\substack{x_0 \in \supp(\mu) , \\ 0 \leq r \leq r_p(x_0)}} \E_{X \sim \mu}\left[ \phi\left(\frac{\|x_0-X\|}{r}\right)^2 \,\Big\vert\, X \in \Ball(x_0,r) \right]
    \leq \frac{d}{c_0(d-2\delta)} . \qedhere
  \]
\end{proof}
The proof of \Cref{cor:nn-risk-tsybakov} follows by combining the bounds on $\mu(\partial_{p,\gamma})$ and $\kappa_p$ from \Cref{lem:nn-risk} with \Cref{thm:nn-risk}.

\subsection{Proof of \Cref{thm:nn-mse}}

\begin{proof}[Proof of \Cref{thm:nn-mse}]
  Fix $x_0 \in \supp(\mu)$. As in the proof of \Cref{thm:nn-risk}, we sample $(X_1,Y_1),\dotsc,(X_n,Y_n)$ as follows:
    \begin{enumerate}
      \item
        Pick $X_{k+1}$ from the marginal distribution of the $(k+1)$st nearest neighbor of $x_0$.

      \item
        Pick $k$ points $X_1,\dotsc,X_k$ independently from $\mu$ restricted to $\Ball(x_0,\|x_0-X_{k+1}\|)$.

      \item
        Pick $n-k-1$ points $X_{k+2},\dotsc,X_n$ independently from $\mu$ restricted to $\R^d \setminus \Ball(x_0,\|x_0-X_{k+1}\|)$.

      \item
        For each $X_i$, independently pick the label $Y_i$ from the corresponding conditional distribution with mean $\eta(X_i)$.

    \end{enumerate}
    The distance from $x_0$ to its $(k+1)$st nearest neighbor is determined in the first step of this process, from the choice of $X_{k+1}$. The $k$ nearest neighbors of $x_0$ are the points $X_1,\dotsc,X_k$ picked in the second step; their corresponding labels are $Y_1,\dotsc,Y_k$, so the regression estimate at $x_0$ is $\hat\eta(x_0) = \sum_{i=1}^k W_i Y_i$, where
    \[
      W_i := \frac{\phi\left( \frac{\|x_0-X_i\|}{\|x_0-X_{k+1}\|} \right)}{\sum_{j=1}^k\phi\left( \frac{\|x_0-X_j\|}{\|x_0-X_{k+1}\|} \right)} , \quad i=1,\dotsc,k.
    \]
    Define $\epsilon_i := Y_i - \eta(X_i)$ and $b_i := \eta(X_i) - \eta(x_0)$ for $i=1,\dotsc,k$.
    Then
    \[
      \hat\eta(x_0) - \eta(x_0)
      = \sum_{i=1}^k W_i b_i + \sum_{i=1}^k W_i \epsilon_i ,
    \]
    and
    \[
      \E\left[ ( \hat\eta(x_0) - \eta(x_0) )^2 \mid X=x_0 \right]
      = \E\left[ \left( \sum_{i=1}^k W_i b_i \right)^2 \mid X=x_0 \right]
      + \E\left[ \left( \sum_{i=1}^k W_i \epsilon_i \right)^2 \mid X=x_0 \right] .
    \]
    By the smoothness assumption, we have
    \[
      |b_i| = |\eta(X_i) - \eta(x_0)|
      \leq A\|X_i-x_0\|^{\alpha}
      \leq A\|X_{k+1}-x_0\|^{\alpha}
    \]
    so
    \begin{equation}
      \E\left[ \left( \sum_{i=1}^k W_i b_i \right)^2 \mid X=x_0 \right]
      \leq
      A^2 \E[ r_{k+1,n}(x_0)^{2\alpha} ] .
      \label{eq:nn-bias}
    \end{equation}
    Furthermore,
    \begin{align}
      \E\left[ \left( \sum_{i=1}^k W_i \epsilon_i \right)^2 \mid X=x_0 \right]
      & = \E\left[ \sum_{i=1}^k W_i^2\epsilon_i^2 + \sum_{i \neq j} W_iW_j\epsilon_i\epsilon_j \mid X=x_0 \right] \nonumber \\
      & \leq k\bar\sigma^2 \E[ W_1^2 \mid X=x_0 ] \nonumber \\
      & \leq \bar\sigma^2 \left( ke^{-k/4} + \frac{d}{c_0(d-2\delta)k} \right) , \label{eq:nn-variance}
    \end{align}
    where the last inequality follows from \Cref{claim:weight-moment} (below).
    Combining \eqref{eq:nn-bias} and \eqref{eq:nn-variance} concludes the proof.
\end{proof}

\begin{claim} \label{claim:weight-moment}
  Fix $X=x_0 \in \supp(\mu)$, and consider the sampling process for $X_1,\dotsc,X_{k+1}$ in the proof of \Cref{thm:nn-mse} to define
  \[
    W_1 := \frac{\left( \frac{\|x_0-X_1\|}{\|x_0-X_{k+1}\|} \right)^{-\delta}}{\sum_{j=1}^k\left( \frac{\|x_0-X_j\|}{\|x_0-X_{k+1}\|} \right)^{-\delta}} .
  \]
  Under the assumptions of \Cref{thm:nn-mse},
  \[
    \E[ W_1^2 ] \leq e^{-k/4} + \frac{d}{c_0(d-2\delta)k^2} .
  \]
\end{claim}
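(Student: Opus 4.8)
The plan is to reuse the conditional sampling structure from the proof of \Cref{thm:nn-mse}: conditioning on $X_{k+1}$, the points $X_1,\dotsc,X_k$ are iid draws from $\mu$ restricted to $\Ball(x_0,r)$, where $r := \|x_0-X_{k+1}\| = r_{k+1,n}(x_0)$. Writing $V_i := (\|x_0-X_i\|/r)^{-\delta}$, we have $W_1 = V_1/\sum_{j=1}^k V_j$, and since $\|x_0-X_i\| \le r$ while $\phi(t)=t^{-\delta}$ is decreasing with $\phi(1)=1$, each $V_i \ge 1$. Two consequences: $W_1 \le 1$ always, and $\sum_{j=1}^k V_j \ge k$, which gives the deterministic bound $W_1 \le V_1/k$.

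Next I would introduce the ``far $(k{+}1)$st neighbor'' event $E_2 := \{ r_{k+1,n}(x_0) > r_p(x_0) \}$ with the specific choice $p := 2k/n$. The hypothesis $n > 2Z_0 k/(c_0 r_0^d)$ is exactly $p < c_0 r_0^d/Z_0$, so $r_p(x_0) \le (Z_0 p/c_0)^{1/d} < r_0$ (by the volume estimate derived in the proof of \Cref{lem:nn-risk}, which uses only $(c_0,r_0)$-regularity and uniformity of $\mu$). The multiplicative Chernoff bound used in the proof of \Cref{thm:nn-risk} then gives
\[
  \P(E_2) \le \exp\!\left( -\frac{np}{2}\Bigl( 1 - \frac{k}{np} \Bigr)^2 \right) = \exp\!\left( -\frac{2k}{2}\cdot\frac14 \right) = e^{-k/4},
\]
so $\E[W_1^2 \ind{E_2}] \le \P(E_2) \le e^{-k/4}$ using $W_1 \le 1$.

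On the complement $\neg E_2$ we have $r \le r_p(x_0) \le r_0$, so the exact second-moment computation from the proof of \Cref{lem:nn-risk} — which needs only $(c_0,r_0)$-regularity, uniformity of $\mu$, and $0 < \delta < d/2$ — applies to $\Ball(x_0,r)$ and yields $\E[V_1^2 \mid X_{k+1}] \le d/(c_0(d-2\delta))$. Since $\ind{\neg E_2}$ is $X_{k+1}$-measurable, combining with $W_1 \le V_1/k$ gives
\[
  \E[W_1^2 \ind{\neg E_2}] \le \frac{1}{k^2}\, \E\bigl[ \ind{\neg E_2}\, \E[V_1^2 \mid X_{k+1}] \bigr] \le \frac{d}{c_0(d-2\delta)k^2};
\]
adding the two contributions proves the claim. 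Since the only analytic ingredient — the singular-kernel second-moment integral — is already carried out in the proof of \Cref{lem:nn-risk}, there is no real technical obstacle; the one point that needs care is lining up $p = 2k/n$ so that the Chernoff bound produces exactly the stated $e^{-k/4}$ while simultaneously forcing $r \le r_0$ on $\neg E_2$, which is what licenses reusing the regularity-based volume estimate. (The event $\|x_0-X_i\| = 0$ has probability zero and is harmless for the expectation.)
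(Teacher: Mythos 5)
Your proposal is correct and follows essentially the same route as the paper's proof: condition on the $(k{+}1)$st-neighbor distance, bound the probability that it exceeds the radius associated with $p=2k/n$ by $e^{-k/4}$ via the multiplicative Chernoff bound, use that each normalized ratio is at least $1$ to get $W_1^2 \le V_1^2/k^2$ (and $W_1\le 1$ on the bad event), and invoke the second-moment bound $d/(c_0(d-2\delta))$ from \Cref{lem:nn-risk} on the good event where the radius is at most $r_0$. The only cosmetic difference is that you phrase the good event via $r_p(x_0)$ and an explicit $X_{k+1}$-measurable indicator, while the paper uses $R_p=(Z_0p/c_0)^{1/d}$ and conditions on the event directly; these are interchangeable.
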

\begin{proof}
  Let $p := 2k/n$ and $R_p := (Z_0p/c_0)^{1/d}$. Let $E_1$ be the event that $r_{k+1,n}(x_0) \leq R_p$. The assumption $n > 2Z_0k/(c_0r_0^d)$ implies that $R_p \leq r_0$. Furthermore, the arguments in the proof of \Cref{cor:nn-risk-tsybakov} imply that $\mu(\Ball(x_0,R_p)) \geq p$. Therefore, as in the proof of \Cref{thm:nn-risk}, it follows from a multiplicative Chernoff bound that $\P(E_1) \geq 1 - e^{-k/4}$.

  Now observe that
  \[
    W_1^2
    = \frac{\left(\frac{\|x_0 - X_1\|}{\|x_0 - X_{k+1}\|}\right)^{-2\delta}}{\left(\sum_{j=1}^k\left(\frac{\|x_0 - X_j\|}{\|x_0 - X_{k+1}\|}\right)^{-\delta}\right)^2}
    \leq \frac1{k^2} \left(\frac{\|x_0 - X_1\|}{\|x_0 - X_{k+1}\|}\right)^{-2\delta}
    .
  \]
  Therefore, by \Cref{lem:nn-risk} (part 3),
  \[
    \E[ W_1^2 \mid E_1 ]
    \leq \frac1{k^2} \cdot
    \E\left[ \left(\frac{\|x_0 - X_1\|}{\|x_0 - X_{k+1}\|}\right)^{-2\delta} \mid E_1 \right]
    \leq \frac1{k^2} \cdot \frac{d}{c_0(d-2\delta)} .
  \]
  The claim follows because
  $\E[ W_1^2 ] \leq \P(\neg E_1) + \E[ W_1^2 \mid E_1 ]$.
\end{proof}

\section{Interpolating kernel regression and semi-supervised learning}
\label{sec:kernel-regression}

\subsection{Interpolation in RKHS}

In this section we make some informal observations and notes on kernel regression in Reproducing Kernel Hilbert Space  and semi-supervised learning. Full and rigorous exploration of these theoretically rich and practically significant topics is well beyond the scope of this paper and, likely, requires new theoretical insights. Still, we feel that some comments and observations may be of interest and could help  connect our treatment of interpolation to other related ideas. 

We start by discussing a simple setting already mentioned in Section~\ref{sec:discussion}. 
Consider the space $\cH$ of real-valued functions $f$ with the norm defined as
\begin{equation}
\label{eq:rkhs}
\|f\|_{\cH}^2 = \frac{1}{2}\int  (\dif f/\dif x)^2 + \kappa^2 f^2 \dif x .
\end{equation}
This space is a reproducing kernel Hilbert Space corresponding to the Laplace kernel $e^{-\kappa |x-z|}$. 
We can now define the minimum norm interpolant as 
\begin{equation*}
\hat\eta =\argmin_{f \in \cH,\forall_i f(x_i) = y_i } \|f\|_{\cH} .
\end{equation*}
It is well-known that the $\hat\eta$ can be written as a linear combination of kernel functions:
$$
\hat \eta(x) = \sum_i \alpha_i e^{-\kappa |x_i-x|} .
$$
The coefficient $\alpha_i$ can be  obtained by solving a system of linear equations given by 
$\hat\eta(x_i) = y_i$. 

It is however not necessary to solve this system to find the interpolating solution. Minimizing the norm directly, from the calculus of variations it follows that $\hat\eta$ satisfies the following differential equation:
\begin{equation}
\label{eq:diff}
  \od[2]{\hat\eta}{x} = \kappa^2 \hat\eta .
\end{equation}
This equation should be solved in each interval $(x_i,x_{i+1})$ separately (here, assuming $x_1 < \dotsb < x_n$). The boundary conditions $g(x_i)=y_i$ and $g(x_{i+1})=y_{i+1}$ uniquely determine the solution of this second order ODE inside the interval. 

Importantly, note that as $\kappa \rightarrow 0$, the solution tends to a linear interpolation between the sample points, since the differential equation becomes $\od[2]{\hat\eta}{x}=0$, i.e., $\hat\eta(x)=a+bx$ in each interval with the line passing through the samples at the ends of the interval. This observation connects RKHS interpolation in one dimension to simplicial interpolation analyzed in some detail in this paper. 
Higher dimensional RKHS kernel interpolation is significantly harder to analyze but some insight may be gained by considering a special case below.

\subsection{Connections to semi-supervised learning}

We will now discuss a discrete version of \eqref{eq:diff} on a graph and its connection to semi-supervised learning. We do not attempt any theoretical analyses of these methods here. 
Let $G=(V,E)$ be a (potentially weighted) graph with vertices $x_i, i=1,\ldots,n$. Let $W$ be its adjacency matrix and $L$ the corresponding graph Laplacian.
We can now consider the (finite-dimensional) space of functions $f$ defined on the vertices of the graph $G$. The following definition of the norm is the discrete analogue of \eqref{eq:rkhs}:
\begin{equation*}
\|f\|_{\cH}^2 = \frac{1}{2} \left[\sum_{i,j} w_{ij}(f(x_i)-f(x_j))^2 + \kappa^2\sum_i f(x_i)^2 \right] .
\end{equation*}
This norm defines a finite dimensional RKHS on the vertices of the graph $G$. In the semi-supervised setting, where some of the vertices, $x_1,\ldots,x_k$ have labels $y_1,\ldots, y_k$, the interpolation problem becomes almost the same as before
\begin{equation*}
\hat\eta =\argmin_{f \in \cH,\forall_{i \in \{1,\ldots, k\}} f(x_i) = y_i } \|f\|_{\cH} .
\end{equation*}

Considering $\hat\eta$ as a vector, we see that the analogue of the differential equation in \eqref{eq:diff} is the system of  linear equations 
\begin{eqnarray*}
  (L\, \hat\eta) (x_i) &=& \kappa^2 \hat\eta(x_i), \quad i=k+1,\dotsc,n  , \\
  \hat\eta(x_i) &=& y_i, \quad i=1,\dotsc,k .
\end{eqnarray*}
The set of linear equations determining the minimum norm interpolating solution on the unlabelled points $i$ can be recast into a somewhat more intuitive form:
\begin{eqnarray*}
\sum_{i\in {\cal N}_i} w_{ij} (\hat{\eta}_i-\hat{\eta}_j) + \kappa^2 \hat{\eta}_i &=& 0 ,
\end{eqnarray*}
or, equivalently
\begin{eqnarray*}
\hat{\eta}_i &=& \frac{z_i}{\kappa^2+z_i}\bar{\eta_i} , \\
\bar{\eta_i}&=&\frac{1}{z_i}\sum_{i\in{\cal N}_i} w_{ij} \hat{\eta}_j .
\end{eqnarray*}
Here ${\cal N}_i$ is the set of neighbors of $i$ (i.e., nodes connected to $i$ by edges of the graph) and $z_i =\sum_{i\in{\cal N}_i} w_{ij}$ is the weighted degree of the $i$th vertex.

The classifier for semi-supervised classification can be obtained by thresholding $\hat\eta(x_i)$. This provides a graph-based interpolated semi-supervised learning algorithm similar to label propagation~\cite{zhu2003semi} or interpolated graph regularization~\cite{belkin2004regularization}. Indeed, when $\kappa \to 0$, this scheme becomes label propagation. 
Interestingly, and consistently with the main story of this paper, it has been observed empirically in various works including the references above that interpolated semi-supervised learning typically provides optimal or near-optimal results compared to regularization.

If the graph corresponded to a  (unweighted) hypercubic lattice in $d$-dimensions, then the degree of each vertex is $z_i=2d$. Thus, the interpolating solution has the property that at each unlabeled vertex, the inferred label value is proportional to the average of the assigned labels in the neighboring vertices. This is reminiscent of the interpolated nearest neighbor algorithms  discussed in this paper. 

While the solution of these equations  generally depends on the structure of the neighborhood graph, there is a particularly simple case for which a closed form solution is easily obtained. This corresponds to the fully connected (unweighted) graph. The fully connected graph can be viewed as a local model for high-dimensional data. Similarly, it is used  in the physics literature to mimic an infinite dimensional lattice. 

Consider the classification setting with $y_i \in \{\pm 1\}$ with the number of labeled points $k = n_+ + n_-$, where $n_+$ and $n_-$ are the numbers of positive and negative labels in the set. Each point has $n-1$ neighbors, i.e., $z_i=n-1$. We also assume $\P(y_i=1)=p$ and $\P(y_i=-1)=1-p$, i.e., the probability distribution of the label is the same at each vertex. If $p>\frac{1}{2}$, then the Bayes classifier always picks the positive class, and the Bayes error is $1-p$. 

It is easy to see with the above assumptions that the semi-supervised learning algorithm described above recovers the Bayes classifier when $k \rightarrow \infty$. Since each unlabeled vertex is equivalent, the solution $\eta_i^U=\eta^U$ does not depend on $i$. Thus, the minimum norm interpolating solution is constant on all the unlabeled points and is given by 
\begin{eqnarray*}
\hat{\eta}^U &=&\frac{(n-1-k)\hat{\eta}^U + n_+-n_-}{n-1+\kappa^2} , \\
\hat{\eta}^U &=& \frac{n_+-n_-}{k+\kappa^2} .
\end{eqnarray*}
The value of the interpolating regression function in this example is a constant and is {\it independent of the number of unlabeled points}. The plug-in classifier output is given at every site by $\hat{f} = \sign(\hat\eta^U) = \sign(n_+-n_-)$. Notice, as in $d=1$, the classifier output does not depend on $\kappa$. 

If $p>\frac{1}{2}$ and $k$ is large, then $\hat{f}$ is therefore $+1$ with high probability, and for $k\rightarrow \infty$ one recovers the Bayes classifier. Using Hoeffding's inequality for the Binomially distributed $n_+$, the excess risk is exponentially small:
\begin{equation*}
\P(\hat f(X) \neq f^*(X)) = \P(n_+-n_-<0) \leq e^{-2(p-\frac{1}{2})^2k} .
\end{equation*}

\end{document}